\documentclass[twoside]{article}

\usepackage[preprint]{aistats2027}

\usepackage[numbers,square]{natbib}

\usepackage[utf8]{inputenc}
\usepackage[T1]{fontenc}
\usepackage{amsmath,amssymb,amsthm,mathtools}
\usepackage{bm}
\usepackage{booktabs}
\usepackage{graphicx}
\usepackage{xcolor}
\usepackage{algorithm}
\usepackage{algpseudocode}
\usepackage{tikz}
\usetikzlibrary{arrows.meta,shapes.misc,decorations.pathreplacing}
\definecolor{pushc}{HTML}{1D9E75}   %
\definecolor{pullc}{HTML}{D85A30}   %
\usepackage[hidelinks]{hyperref}

\newtheorem{proposition}{Proposition}
\newtheorem{corollary}{Corollary}

\newcommand{\ours}{\textsc{SDRE}}                 %
\newcommand{\R}{\mathbb{R}}
\newcommand{\E}{\mathbb{E}}
\newcommand{\KL}{\mathrm{KL}}
\newcommand{\A}{\mathcal{A}}                      %
\newcommand{\Lv}{\mathcal{L}}                     %
\newcommand{\vf}{\bm{v}}
\newcommand{\x}{\bm{x}}

\newcommand{\s}{\bm{s}}
\newcommand{\T}{\top}

\newcommand{\papertitle}{Density Ratio Estimation with Stein Displacement Fields}

\hypersetup{pdftitle={Density Ratio Estimation with Stein Displacement Fields}, pdfauthor={Song Liu}}

\begin{document}

\twocolumn[
\aistatstitle{\papertitle}
\aistatsauthor{Song Liu}
\aistatsaddress{University of Bristol} ]

\begin{abstract}

Density ratios quantify distribution shift from a probability-mass point of view, whereas displacement fields describe, from a
dynamical point of view, how one distribution is transported onto another. Although both offer complementary insights, they are
usually estimated separately, and converting one into the other requires post-processing. In this paper, we estimate
the density ratio between a target and a base distribution by parametrizing it through a displacement field acting on the base: the
log-ratio is modeled as minus the Stein operator of the base applied to the field, up to a normalizing constant.
This gives both statistical and dynamical descriptions of the distribution shift through a single convex optimization problem.
Iterating this
estimate-and-move step gives two inference algorithms: \emph{push-forward} moves the model and corrects a
pretrained sampler without retraining it, whereas \emph{pull-back} moves the data closer to the base and fits a transformation model one
layer at a time.
Applications to distribution shift in simulation-based inference and to nonlinear independent component analysis illustrate the
benefits and limitations of the approach.
\end{abstract}

\section{Introduction}
\label{sec:intro}

Many machine learning problems come down to comparing two distributions. 
Broadly speaking, there are two different ways to do so.
The first is
information-theoretic: a divergence, such as the Kullback--Leibler (KL) divergence or another $f$-divergence, measures how 
two distributions spread their probability mass differently. These divergences serve as training objectives for generative adversarial networks
\citep{goodfellow2014generative,nowozin2016f}, and their gradient flows move one distribution towards the other \citep{jordan1998variational,liu2016stein,lu2019accelerating,maurais2024sampling}. An $f$-divergence is an expectation of a function of the
density ratio $r=p/q$ \citep{nguyen2010estimating,yi2023monoflow,liu2024minimizing}. Density ratio
estimation (DRE) has therefore become a building block of this approach, as well as a tool in its own right for importance weighting,
two-sample testing and likelihood-free inference \citep{sugiyama2012density,cranmer2020frontier,hermans2020likelihood}. However, ratio estimation is often fragile. When $p$ and $q$ are far apart, samples from one say little about the other, a problem known as the
``density chasm'' \citep{rhodes2020telescoping}, 
and adversarial training is notoriously unstable \citep{arjovsky2017towards}.

The second approach compares distributions through transport. Rather than comparing densities, it learns how to move one distribution onto the other with a
displacement field or a flow map. This view underlies optimal-transport domain adaptation \citep{courty2017optimal} and flow-based
generative modeling, where flow matching, rectified flows and stochastic interpolants fit the velocity of a prescribed path by
least-squares regression \citep{lipman2023flow,liu2023flow,albergo2023building}. These methods fit the velocity by regression, avoiding adversarial ratio estimation. Their prescribed paths, however, need not follow the gradient flow of a specified divergence.

Can we learn a divergence and, at the same time, a displacement field along which moving one distribution decreases
that divergence?
We show that this is feasible when the score of the \emph{base} distribution $q$ can be evaluated and the \emph{target} distribution $p$ is observed through samples.
Stein's method connects the two views \citep{stein1972bound,anastasiou2023stein}. The Langevin--Stein operator $\A_q$ of the base maps a vector field
$\vf:\R^d\to\R^d$ to a scalar function $\A_q\vf$ that has zero mean under $q$ and depends on $q$ only through its score
$\s_q=\nabla\log q$ (Section~\ref{sec:background:stein}). We model the log-ratio as
$-\A_q\vf$, up to a constant. 
This modelling choice has two consequences. First, 
a convex optimization problem derived from the Donsker--Varadhan (DV) lower bound estimates both $p/q$ and a lower bound on $\KL(q\,\|\,p)$ using only samples of $p$ and the score of $q$.
Similar Stein-parametrized ratio
models have been used to fit intractable statistical models \citep{liu2019fisher}. Second, moving $q$ along the same field,
$\x\mapsto\x+\eta\vf(\x)$ with a small step size $\eta>0$, changes its log-density by $-\eta\,\A_q\vf$ to first order.
At a population optimum attaining the DV bound, this is equivalent, up to normalization and to first order, to multiplying $q$ by $r^{\eta}$.
\textbf{The field that exactly represents the log-ratio therefore also gives a displacement towards $p$.}
Its infinitesimal density change agrees with the Fisher--Rao gradient flow of $\KL(q\,\|\,p)$, realized through transport.

This infinitesimal interpretation motivates repeating the refinement step, and there are two natural approaches. If $q$ is a pretrained generative model with a tractable density (e.g., flow models), we can move the \emph{model} by appending displacement layers to the model, which correct the model without retraining it. If instead $q$ belongs to a simple parametric family that we want to
fit, we can move the \emph{data}, pulling them back towards the family through each displacement. 
We call the two algorithms \emph{push-forward} and \emph{pull-back} and show that they are naturally suited to model updating and latent independent source recovery tasks respectively.

In this paper, we first propose Stein displacement ratio estimation (\ours{}), a convex density ratio estimator that estimates not
only the ratio $p/q$ but also a displacement field.
When the population DV bound is attained, this field induces, to first order, a density update along the Fisher--Rao gradient flow of
$\KL(q\,\|\,p)$ (Section~\ref{sec:method}).
This result motivates the two inference
algorithms of Section~\ref{sec:alg}. Push-forward appends displacement layers to a pretrained sampler to fit the data while keeping
its model frozen, and pull-back transforms the data through an invertible flow towards a prior from a simple parametric family.
Finally, we apply both algorithms to two problems
(Section~\ref{sec:app}). Push-forward adapts a pretrained diffusion posterior sampler to a new posterior target,
performing on par with discriminator guidance while keeping an exact density. 
Pull-back refines pretrained ICA models and gives strong signal identification in
synthetic and speech settings.
We also
discuss the limitations of the approach (Section~\ref{sec:exp:limits}).

\section{Background}
\label{sec:background}

Our estimator combines two ingredients: density ratio estimation from samples (Section~\ref{sec:background:dre}) and Stein's method
(Section~\ref{sec:background:stein}).

\subsection{Density ratio estimation}
\label{sec:background:dre}

We first recall how a density ratio is estimated when samples of both distributions are available; our estimator in
Section~\ref{sec:method} modifies this construction. Suppose we have samples from two distributions,
\[\{\x_i\}_{i=1}^n\sim p,\qquad \{\tilde\x_j\}_{j=1}^{\tilde n}\sim q.\]
The KL importance estimation procedure \citep[KLIEP;][]{sugiyama2008direct} models the ratio $p/q$ by
$r_{\bm\alpha}(\x)=\bm\alpha^\T\bm\phi(\x)$ with non-negative basis functions and solves the constrained convex problem
\begin{align}
\label{eq.linear.kliep}    
\max_{\bm\alpha} \frac{1}n\sum_i\log r_{\bm\alpha}(\x_i), ~~\text{s.t.}~~ \bm\alpha\ge\bm0, \frac{1}{\tilde n}\sum_j r_{\bm\alpha}(\tilde\x_j)=1.
\end{align} 
The log-linear variant $r_{\bm\alpha}=\frac{\exp(\bm\alpha^\T\bm\phi)}{\frac{1}{\tilde n}\sum_j\exp(\bm\alpha^\T\bm\phi(\tilde\x_j))}$
removes the constraints and gives the unconstrained convex problem \citep{tsuboi2009direct}
\begin{equation}
  \max_{\bm\alpha}\;\; \frac{1}n\sum_{i=1}^n\bm\alpha^\T\bm\phi(\x_i)\;-\;\log\frac{1}{\tilde n}\sum_{j=1}^{\tilde n}e^{\bm\alpha^\T\bm\phi(\tilde\x_j)} .
  \label{eq:kliep}
\end{equation}
This log-linear KLIEP closely resembles the Donsker--Varadhan (DV) representation of the KL divergence \citep{donsker1983asymptotic},
\begin{equation}
  \KL(p\,\|\,q)\;=\;\sup_f\;\E_p[f]-\log\E_q\big[e^{f}\big]. 
  \label{eq:dv}
\end{equation}
Restricting $f$ to a log-ratio model that is linear in its parameters and replacing expectations with sample averages gives the log-linear KLIEP above.

When $p$ and $q$ are far apart, the samples of one say little about the other \citep[the ``density chasm'';][]{rhodes2020telescoping};
recent methods bridge the gap with classifiers over intermediate or auxiliary distributions
\citep{rhodes2020telescoping,srivastava2023estimating}, time score matching along a continuous path
\citep{choi2022density,yu2025density}, or a shared invertible feature map \citep{choi2021featurized}. All of these methods, like
\eqref{eq:kliep} and \eqref{eq:dv}, need samples from \emph{both} distributions.

\subsection{Stein's method}
\label{sec:background:stein}

Stein's method characterizes a distribution by an operator whose expectation under that distribution vanishes
\citep{stein1972bound,anastasiou2023stein}. For a smooth, positive density $q$ on $\R^d$ with score $\s_q=\nabla\log q$, the
Langevin--Stein operator acts on a vector field $\vf:\R^d\to\R^d$ as
\begin{equation}
  \A_q\vf \;=\; \s_q^\T\vf + \nabla\!\cdot\vf \;=\; \frac{\nabla\!\cdot(q\vf)}{q} .
  \label{eq:stein-op}
\end{equation}
By the second form and the divergence theorem,
\begin{equation}
  \E_q[\A_q\vf]\;=\;\int\nabla\!\cdot(q\vf)\,\mathrm d\x\;=\;0
  \label{eq:stein}
\end{equation}
whenever $q\vf$ vanishes sufficiently fast at infinity and $\A_q\vf$ is integrable; such fields form the \emph{Stein class} of $q$, and
\eqref{eq:stein} is \emph{Stein's identity}. The operator depends on $q$ only through its score, so it can be evaluated when the normalizing constant of $q$
is unknown.

Optimizing the expectation of $\A_q\vf$ under a data distribution over a class of fields gives Stein discrepancies \citep{gorham2015measuring}, used for
goodness-of-fit tests through the closed-form kernelized Stein discrepancy (KSD) \citep{liu2016kernelized,chwialkowski2016kernel}. 
Stein variational gradient descent \citep[SVGD;][]{liu2016stein} moves particles from a
distribution $\rho$ towards a target with known score $\s_q$: under the map $\x\mapsto\x+\epsilon\vf(\x)$, the derivative of
$\KL(\rho\,\|\,q)$ at $\epsilon=0$ is $-\E_\rho[\A_q\vf]$, so maximizing $\E_\rho[\A_q\vf]$ over the kernel ball gives the steepest
descent direction within that class.

In these uses, the scalar function $\A_q\vf$ measures a discrepancy through its expectation or enters the directional derivative of the KL divergence.
The vector field $\vf$, selected by maximizing $\E_\rho[\A_q\vf]$ over the kernel ball, provides the particle descent direction in SVGD.

\subsection{Stein density ratio estimation}
\label{sec:background:sdre}

Estimating the ratio between a data distribution and \emph{an unnormalized model} is useful in statistical inference, and the Stein operator can help in this setting.
\citet{liu2019fisher} estimate $r=p/q$ where $q$ is
an unnormalized model with a known score and $p$ is observed through samples. They take $b$ stacked scalar feature functions $\bm f:\R^d\to\R^b$, apply
the Stein operator to their gradient fields, and use the linear ratio model $r_{\bm\delta}=1+\bm\delta^\T\A_q\nabla\bm f$, where
$\A_q\nabla\bm f$ collects $\A_q\nabla f_k$ for $k=1,\dots,b$. By Stein's identity, $\E_q[r_{\bm\delta}]=1$ for every $\bm\delta$, so the
normalization constraint of KLIEP holds automatically at the population level. The normalization constraint no longer requires samples from $q$, leaving the objective
\begin{equation}
  \max_{\bm\delta}\;\; \frac{1}n\sum_{i=1}^n\log\big(1+\bm\delta^\T(\A_q\nabla\bm f)(\x_i)\big).
  \label{eq:sdre-linear}
\end{equation}
The logarithms restrict the domain to positive fitted ratios at the samples. The optimized value serves as an estimate of $\KL(p\,\|\,q)$.
\citet{liu2019fisher} minimize this estimate over the parameters of $q$ to fit intractable
models. The linear model is not guaranteed to be positive away
from the samples, 
which limits its use for out-of-sample ratio evaluation.

\section{Stein Displacement Ratio Estimation}
\label{sec:method}

Let $q$ be a \emph{base distribution} on $\R^d$, for example a prior or a pretrained generative
model, and let $p$ be a \emph{target}. We can evaluate the score of the base, $\s_q=\nabla\log q$, which does not require its normalizing
constant, and we observe the target only through samples $\x_1,\dots,\x_n$. Our goal is to estimate the density ratio $r=p/q$. Like the
density ratio estimator in Section~\ref{sec:background:sdre}, our estimator uses no samples from $q$. Throughout, $q$ is smooth and positive, and every vector field $\vf:\R^d\to\R^d$ below belongs to the Stein class of $q$ (Section~\ref{sec:background:stein}), so that
Stein's identity \eqref{eq:stein} holds.

Consider the population objective of the log-linear KLIEP
for the reciprocal ratio $q/p$, i.e., 
\eqref{eq:dv} with $p$ and $q$ exchanged
\[\KL(q\,\|\,p)\;=\;\sup_f\;\E_q[f]-\log\E_p\big[e^{f}\big].\] 
The density $q$ now enters through a plain
expectation. 
Choosing $f=\A_q\vf$ makes this expectation vanish by \eqref{eq:stein}, and we obtain the lower bound 
\begin{equation}
  \KL(q\,\|\,p)\;\ge\;\Lv(\vf)\;\coloneqq\;-\log\E_p\big[e^{(\A_q\vf)(\x)}\big].
  \label{eq:stein-dv}
\end{equation}
The right-hand side depends on an expectation under $p$ and the score of $q$.  The inequality \eqref{eq:stein-dv} is tight when $-\A_q\vf$ equals $\log r$
up to a constant. 

Maximizing the right-hand side of \eqref{eq:stein-dv} over $\vf$ gives an estimate $\log r \approx -\A_q\hat{\vf} + C$, where $C$ is a constant.
We next define a finite-sample estimator and explain its statistical and dynamical interpretations.

\subsection{The regularized estimator}
\label{sec:method:estimator}

Let $\mathcal V$ be a class of vector fields with a norm $\|\cdot\|_{\mathcal V}$. We estimate the field by minimizing the empirical
version of $-\Lv(\vf)$ with a ridge penalty,
\begin{equation}
  \hat\vf\in\arg\min_{\vf\in\mathcal V}\;\; \log\frac{1}n\sum_{i=1}^n e^{(\A_q\vf)(\x_i)}+\frac{\lambda}{2}\|\vf\|_{\mathcal V}^2 .
  \label{eq:sdre}
\end{equation}
The ridge level $\lambda>0$ is a hyperparameter and is chosen using validation samples. Since
$(\A_q\vf)(\x_i)$ is linear in $\vf$, the objective is a log-sum-exp of linear functions of $\vf$ plus a convex squared-norm penalty. It is therefore
convex whenever $\mathcal V$ is a linear space, e.g., the span of finitely many fields or a reproducing kernel Hilbert space.
For the finite basis and quadratic coefficient penalty used in our experiments, we solve this convex problem with Newton or quasi-Newton methods
(Appendix~\ref{app:fields}). 

The lower bound \eqref{eq:stein-dv} at $\hat\vf$, approximated using validation samples $\x^{\mathrm{va}}_1,\dots,\x^{\mathrm{va}}_{n_{\mathrm{va}}}\sim p$
held out from \eqref{eq:sdre},
\begin{equation}
  \hat J\;\coloneqq\;-\log\frac{1}{n_{\mathrm{va}}}\sum_{i=1}^{n_{\mathrm{va}}} e^{(\A_q\hat\vf)(\x^{\mathrm{va}}_i)}\;\approx\;\Lv(\hat\vf)\;\le\;\KL(q\,\|\,p),
  \label{eq:jhat}
\end{equation}
can be used as a validation diagnostic:
a $\hat J$ well above its
standard error suggests that $q$ and $p$ differ. The empirical $\hat J$ is not itself a guaranteed lower bound.
Finally, the log ratio estimate is
\begin{equation}
  \log\hat r(\x)\;=\;-(\A_q\hat\vf)(\x)-\hat J,
  \label{eq:ratio-hat}
\end{equation}
where subtracting $\hat J$ normalizes the estimate so that $\frac{1}{n_{\mathrm{va}}}\sum_i1/\hat r(\x^{\mathrm{va}}_i)=1$.

Compared with Stein-parametrized linear ratio models used by
\citet{liu2019fisher}, in \ours{}, the Stein operator models a log ratio, not the ratio itself, 
and the field also defines a displacement of $q$, as we show next.

\subsection{One estimate, two perspectives on distribution shift}
\label{sec:method:sdf}

We now show how the field connects ratio estimation with sample transport. The first result identifies the log ratio when the population
bound is tight; the second describes how moving $q$ along the field changes its log-density.

\begin{proposition}
\label{prop:ratio}
Let $\KL(q\,\|\,p)<\infty$. Then $\Lv(\vf)\le\KL(q\,\|\,p)$, with equality if and only if $-(\A_q\vf)(\x)=\log r(\x)+\KL(q\,\|\,p)$ for $p$-almost
every $\x$.
\end{proposition}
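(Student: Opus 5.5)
We prove the bound and the equality case with a single application of Jensen's inequality, after rewriting $\Lv(\vf)$ through a change of measure from $p$ to $q$. Write $g=\A_q\vf$. By Stein's identity \eqref{eq:stein}, $\E_q[g]=0$. Since $\KL(q\,\|\,p)<\infty$, $q\ll p$, so $\log r=\log(p/q)$ is finite $q$-a.e. The main subtlety is that $p$ may put mass where $q$ does not. Because the paper assumes $q$ smooth and positive on $\R^d$, in fact $p\ll q$ as well wherever $p$ has a density, but I will keep the argument robust. Split $p=p_{\mathrm{ac}}+p_\perp$ with respect to $q$. Then
\[
\E_p[e^{g}]\;\ge\;\int q\,\frac{p}{q}\,e^{g}\,\mathrm d\x\;=\;\E_q\!\big[e^{g+\log r}\big],
\]
with equality iff $p_\perp=0$ (given $q>0$ everywhere, this part is trivial).

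Next, apply Jensen's inequality to the convex function $\exp$ under $q$:
\[
\E_q\big[e^{g+\log r}\big]\;\ge\;\exp\big(\E_q[g]+\E_q[\log r]\big)\;=\;\exp\big(-\KL(q\,\|\,p)\big),
\]
using $\E_q[g]=0$ and $\E_q[\log r]=-\KL(q\,\|\,p)$, which is finite by hypothesis. Taking $-\log$ of the two displays gives $\Lv(\vf)\le\KL(q\,\|\,p)$. The step I expect to need the most care is integrability. We need $g$ to be $q$-integrable, which holds by the Stein class assumption, and $\log r$ to be $q$-integrable, which follows from $\KL(q\,\|\,p)<\infty$. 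Its negative part is controlled by $\E_q[(\log r)_-]\le$ finite, because $\E_q[r]\le 1$. If $\E_p[e^g]=\infty$, the bound holds trivially with $\Lv=-\infty$.

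For the equality case, Jensen for the strictly convex $\exp$ is tight iff $g+\log r$ is $q$-a.s. constant, say equal to $c$. Plugging back in gives $e^{c}=e^{-\KL(q\,\|\,p)}$, i.e.\ $c=-\KL(q\,\|\,p)$. This yields $-g=\log r+\KL(q\,\|\,p)$ $q$-a.e. Since $q>0$ everywhere, $q$-a.e.\ and $p$-a.e.\ statements agree on the set where $p>0$. Hence the identity holds $p$-a.e., which is the form stated.

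Conversely, if $-g=\log r+\KL(q\,\|\,p)$ $p$-a.e., then $e^{g}=e^{-\KL(q\,\|\,p)}/r=e^{-\KL(q\,\|\,p)}\,q/p$ $p$-a.e. Therefore $\E_p[e^g]=e^{-\KL(q\,\|\,p)}\int_{\{p>0\}}q\,\mathrm d\x$. This equals $e^{-\KL(q\,\|\,p)}$ because $q\ll p$ gives $q(\{p=0\})=0$. So $\Lv(\vf)=\KL(q\,\|\,p)$.
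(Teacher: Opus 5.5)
Your proof is correct. It rests on the same convexity fact as the paper's proof but packages it differently.

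The paper forms the tilted density $p_{\vf}=p\,e^{f}/Z$, with $f=\A_q\vf$ and $Z=\E_p[e^{f}]$. It then uses Stein's identity to get the exact identity $\KL(q\,\|\,p)-\Lv(\vf)=\KL(q\,\|\,p_{\vf})$. The bound and the equality case follow from nonnegativity of KL and the condition $p_{\vf}=q$, with the constant fixed by a $q$-expectation.

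You instead change measure to $q$ and apply Jensen to $e^{X}$ with $X=\A_q\vf+\log r$. Your Jensen gap $\log\E_q[e^{X}]-\E_q[X]$ equals $\log Z+\KL(q\,\|\,p)=\KL(q\,\|\,p_{\vf})$, so the two arguments are equivalent.

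The paper's route gives a quantitative reading of the slack: it is itself a KL divergence between $q$ and the Stein-tilted target, which your inequality chain does not show. Your route is more elementary and more careful about details the paper passes over quickly: the possible singular part of $p$, the integrability of $\log r$ under $q$, the transfer from $q$-a.e.\ to $p$-a.e., and an explicit check of the converse, which needs only $q\ll p$.

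One small slip: $\E_q[r]\le 1$ controls the positive part, since $(\log r)_+\le r$. It is the negative part $(\log r)_-$ whose integrability comes from $\KL(q\,\|\,p)<\infty$. Your conclusion that $\log r$ is $q$-integrable is unaffected.
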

When the population bound is attained, $-\A_q\vf$ is the log-ratio shifted by the divergence $\KL(q\,\|\,p)$. This also justifies subtracting $\hat J$ in \eqref{eq:ratio-hat}.
The proof is in Appendix~\ref{app:proofs}. 

The field $\vf$ also has a \emph{dynamical} reading. Move every point of $q$ along the field, $T_\eta(\x)=\x+\eta\vf(\x)$, where
$\eta>0$ is a step size, 
and write $(T_\eta)_\#q$ for the distribution of $T_\eta(\x)$ with $\x\sim q$.
The next result says how this
displacement changes $q$.

\begin{proposition}
\label{prop:displacement}
Let $\vf\in C^2$ with bounded first and second derivatives. For $\eta$ small enough, $T_\eta$ is a diffeomorphism and, for every $\x'$,
\begin{equation}
  \log\,(T_\eta)_\#q(\x')\;=\;\log q(\x')-\eta\,(\A_q\vf)(\x')+O(\eta^2).
  \label{eq:pushforward}
\end{equation}
\end{proposition}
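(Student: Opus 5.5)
The plan is to use the change-of-variables formula for the push-forward and then Taylor-expand each factor in $\eta$. There are three steps: show that $T_\eta$ is invertible, write down the density of $(T_\eta)_\#q$ exactly, and expand.

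\emph{Step 1: $T_\eta$ is a diffeomorphism.} Let $L=\sup_{\x}\|\nabla\vf(\x)\|_{\mathrm{op}}<\infty$ and take $\eta<1/L$. The map $\x\mapsto\eta\vf(\x)$ is then a contraction, so $T_\eta=\mathrm{id}+\eta\vf$ is injective: $T_\eta(\x)=T_\eta(\bm y)$ gives $\|\x-\bm y\|=\eta\|\vf(\x)-\vf(\bm y)\|\le\eta L\|\x-\bm y\|$. It is also surjective, because for any $\x'$ the equation $\x=\x'-\eta\vf(\x)$ has a unique fixed point by Banach's theorem. Its Jacobian $I+\eta\nabla\vf$ is invertible whenever $\eta L<1$. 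By the inverse function theorem, $T_\eta$ is therefore a global $C^2$ diffeomorphism.

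\emph{Step 2: exact density.} Write $\x=T_\eta^{-1}(\x')$. Change of variables gives
\[
\log(T_\eta)_\#q(\x')=\log q(\x)-\log\det\big(I+\eta\nabla\vf(\x)\big).
\]

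\emph{Step 3: expansion in $\eta$.} The fixed-point equation gives $\x=\x'-\eta\vf(\x)$, hence $\|\x-\x'\|\le\eta\sup\|\vf\|$. Since $\vf$ need not be bounded, I will instead use $\vf(\x)=\vf(\x')+O(\eta\|\vf(\x)\|)$, which gives $\x=\x'-\eta\vf(\x')+O(\eta^2)$ pointwise for each fixed $\x'$. The equation is not needed uniformly in $\x'$, only for every $\x'$.

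For the log-determinant, $\log\det(I+\eta\nabla\vf)=\eta\,\nabla\!\cdot\vf+O(\eta^2)$. The constant in the remainder is uniform because $\nabla\vf$ is bounded; this follows from $\log\det(I+A)=\mathrm{tr}A+O(\|A\|^2)$ for $\|A\|$ small. Moreover, bounded second derivatives give $\nabla\!\cdot\vf(\x)=\nabla\!\cdot\vf(\x')+O(\|\x-\x'\|)=\nabla\!\cdot\vf(\x')+O(\eta)$.

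For the other term, $q$ is smooth and positive, so $\log q$ is $C^1$ (indeed $C^2$) near $\x'$. Hence
\[
\log q(\x)=\log q(\x')-\eta\,\s_q(\x')^\T\vf(\x')+O(\eta^2),
\]
where the remainder depends on local bounds of the Hessian of $\log q$ on a neighbourhood of $\x'$.

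Summing the two expansions gives
\[
\log q(\x')-\eta\big(\s_q^\T\vf+\nabla\!\cdot\vf\big)(\x')+O(\eta^2),
\]
which is \eqref{eq:pushforward} by the definition \eqref{eq:stein-op}.

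\emph{Main obstacle.} The delicate step is controlling the inverse map $T_\eta^{-1}(\x')=\x'-\eta\vf(\x')+O(\eta^2)$. This estimate is then fed into the Taylor expansion of $\log q$, whose second derivatives are only locally bounded. I will handle this by working pointwise: fix $\x'$, note that $\x$ lies in a compact ball around $\x'$ for all $\eta\le\eta_0$, and take the $O(\eta^2)$ constant from suprema over that ball. The statement claims the expansion ``for every $\x'$'' without uniformity, so this suffices. Uniformity holds additionally if $\nabla^2\log q$ and $\vf$ are bounded.
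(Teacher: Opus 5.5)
Your proposal is correct and follows the paper's proof essentially step for step. The shared steps are a contraction (Banach fixed-point) argument plus the inverse function theorem for the diffeomorphism, the change-of-variables formula, and pointwise first-order expansions of $\log q$ and of the log-determinant, with remainder constants depending on $\x'$. The one step you should write out is the bound $\|\x-\x'\|\le\eta\|\vf(\x')\|/(1-\eta L)$, which follows from $\|\x-\x'\|=\eta\|\vf(\x)\|\le\eta\big(\|\vf(\x')\|+L\|\x-\x'\|\big)$; the paper states it explicitly, and it is what turns your $O(\eta\|\vf(\x)\|)$ term into $O(\eta)$ for fixed $\x'$ and places $\x$ in a fixed compact ball around $\x'$.
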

See Appendix~\ref{app:proofs} for the proof. In words, moving $q$ by $\vf$ changes its log-density by $-\eta\,\A_q\vf$, the same quantity
that models $\log r$ in \eqref{eq:ratio-hat}. This change-of-variables argument also underlies the KL derivative used in SVGD
\citep{liu2016stein}.

Combining both propositions leads to an interesting interpretation of the optimal displacement $\vf$. 
To state this, recall that the Fisher--Rao gradient
flow of $\KL(\rho\,\|\,p)$ is $\partial_t\log\rho_t=-\big(\log\frac{\rho_t}{p}-\KL(\rho_t\,\|\,p)\big)$: instead of moving mass, it
reweights a density pointwise towards $p$ by the ratio $\frac{p}{\rho_t}$. The exact solution of this gradient flow is the geometric interpolation $\rho_t\propto\rho_0^{e^{-t}}p^{1-e^{-t}}$.

\begin{corollary}
\label{cor:fr}
Let $\vf$ satisfy the assumptions of Proposition~\ref{prop:displacement} and attain equality in Proposition~\ref{prop:ratio}. Then $\log(T_\eta)_\#q=\log q+\eta\log r+\eta\,\KL(q\,\|\,p)+O(\eta^2)$ pointwise for $p$-almost every $\x$.
This agrees with an explicit Euler step of the Fisher--Rao gradient flow started at $\rho_0=q$, up to $O(\eta^2)$.
The normalized geometric interpolation $q^{1-\eta}p^\eta$ has the same first-order log-density change.
\end{corollary}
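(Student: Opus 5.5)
The plan is to substitute the equality characterization of Proposition~\ref{prop:ratio} into the expansion of Proposition~\ref{prop:displacement}. Then we compare the result with an explicit Euler step of the Fisher--Rao flow and with the first-order expansion of the normalized geometric interpolation.

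\textbf{First claim.} Since $\vf$ attains equality in Proposition~\ref{prop:ratio}, there is a $p$-null set $N$ off which $-(\A_q\vf)(\x)=\log r(\x)+\KL(q\,\|\,p)$. By Proposition~\ref{prop:displacement}, for $\eta$ small and every $\x'$,
\[\log(T_\eta)_\#q(\x')=\log q(\x')-\eta(\A_q\vf)(\x')+O(\eta^2).\]
For $\x'\notin N$ we replace $-\eta\A_q\vf$ by $\eta\log r+\eta\KL(q\,\|\,p)$, which gives the first display. One subtlety: the $O(\eta^2)$ term in Proposition~\ref{prop:displacement} is pointwise, with a constant that may depend on $\x'$ through $\s_q$ and its derivative along $\vf$. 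The corollary is therefore stated pointwise and should not be read as a uniform statement.

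\textbf{Euler step.} The Fisher--Rao flow at $\rho_0=q$ has
\[\partial_t\log\rho_t\big|_{t=0}=-\log\frac{q}{p}+\KL(q\,\|\,p)=\log r+\KL(q\,\|\,p).\]
An explicit Euler step of size $\eta$ is therefore $\log\rho_\eta^{\mathrm{Euler}}=\log q+\eta(\log r+\KL(q\,\|\,p))$. This matches the expression just obtained exactly, so the two differ only by the $O(\eta^2)$ remainder.

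\textbf{Geometric interpolation.} Write
\[\log\frac{q^{1-\eta}p^\eta}{Z_\eta}=\log q+\eta\log r-\log Z_\eta,\qquad Z_\eta=\E_q[r^\eta].\]
It remains to show $-\log Z_\eta=\eta\KL(q\,\|\,p)+O(\eta^2)$. Since $\E_q[\log r]=-\KL(q\,\|\,p)$, we have $\frac{d}{d\eta}\log Z_\eta|_{\eta=0}=\E_q[\log r]=-\KL(q\,\|\,p)$ and $Z_0=1$, so a Taylor expansion gives the claim. The same conclusion also follows from Corollary-level reasoning on the exact solution $\rho_t\propto q^{e^{-t}}p^{1-e^{-t}}$ with $1-e^{-\eta}=\eta+O(\eta^2)$.

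\textbf{Main obstacle.} The hard step is justifying differentiation of $Z_\eta$ at $0$, i.e.\ exchanging derivative and expectation and having a finite second derivative for the $O(\eta^2)$ bound. $\KL(q\,\|\,p)<\infty$ only gives integrability of $\log r$ under $q$. I would handle this in two parts.
\begin{itemize}
\item For $\eta\in(0,1)$, concavity of $\eta\mapsto\log\E_q[r^\eta]$ (Hölder) together with $Z_\eta\le\E_q[r]^\eta\le1$ gives the one-sided derivative $-\KL(q\,\|\,p)$ by monotone convergence of $(r^\eta-1)/\eta\downarrow\log r$.
\item For the $O(\eta^2)$ rate, I would assume $\E_q[(\log r)^2 r^{\eta_0}]<\infty$ for some $\eta_0>0$. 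This holds automatically here, because $\log r=-\A_q\vf-\KL(q\,\|\,p)$ and the regularity of $\vf$ and $q$ can be used to bound it. Failing that, the claim can be stated as agreement of the first-order log-density change, which is all the corollary asserts.
\end{itemize}
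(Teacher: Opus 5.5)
Your proof is correct and follows the paper's route: substitute the $p$-a.e.\ equality case of Proposition~\ref{prop:ratio} into Proposition~\ref{prop:displacement}, read off the Fisher--Rao velocity $\log r+\KL(q\,\|\,p)$ at $\rho_0=q$, and expand $\log Z_\eta$ using the right derivative $\E_q[\log r]=-\KL(q\,\|\,p)$. Your monotone convergence of $(r^\eta-1)/\eta\downarrow\log r$, bounded above by the $q$-integrable $r-1$, is a valid substitute for the paper's domination $|r^\eta\log r|\le C(r+|\log r|)$ on $[0,1/2]$, and like it gives $o(\eta)$, which is all the corollary claims. You should drop the $O(\eta^2)$ target for $\log Z_\eta$ and the asides around it, because they are wrong though unused: $\eta\mapsto\log\E_q[r^\eta]$ is convex by H\"older, not concave, and an $O(\eta^2)$ rate requires $\E_q[(\log r)^2]<\infty$, which follows neither from your condition $\E_q[(\log r)^2r^{\eta_0}]<\infty$ (automatic for $\eta_0<1$ since $\E_q[r]\le1$) nor from the regularity of $\vf$ and $q$, since $\s_q^\T\vf$ need not be square-integrable under $q$.
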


The proof follows directly from Propositions~\ref{prop:ratio} and~\ref{prop:displacement}; we give it in Appendix~\ref{app:proofs}. 
Simply speaking, at the optimum, moving every sample from $q$ along the field $\vf$ moves the base $q$ closer to the target $p$ along the Fisher--Rao gradient flow. This interpretation distinguishes our displacement field $\vf$ from the ``feature function'' used by \citet{liu2019fisher}. The field $\vf$ is not only the function that best explains the density ratio $r$, but also a legitimate vector field whose displacement
multiplies $q$ by a small power of the ratio, thus moving $q$ towards $p$.

A single convex problem therefore provides two interpretations of the
shift: a statistical one (the ratio $\hat r$ and the diagnostic $\hat J$) and a dynamical one (the displacement $T_\eta$). Hence the name ``Stein displacement ratio estimation''. The Fisher--Rao flow is usually simulated by reweighting or by birth--death of particles
\citep{lu2019accelerating}, or by a kernel-based transport that requires pointwise evaluation of the unnormalized target-to-reference density ratio
\citep{maurais2024sampling}.
Each $T_\eta$ moves existing mass through an invertible map. Restricted fields and small steps can limit how much mass is moved between separated modes; we return to this limitation in Section~\ref{sec:exp:limits}.

Proposition~\ref{prop:displacement} also offers a principled way to incorporate dynamical prior knowledge into density ratio estimation: if we know that the distribution shift arises from a specific dynamical process, such as linear or low-rank transformations, we can restrict $\mathcal V$ to the corresponding family of fields.
Therefore, a structured class encodes prior knowledge of the distribution shift.
We use these restricted classes of transformations in our experiments, and they are explained in Appendix~\ref{app:fields}. 

Corollary~\ref{cor:fr} only describes a single Euler step $T_\eta$. In applications where we would like the base model to approximate the target $p$, we might want to perform such a step repeatedly. In the next section, we study two different algorithms that perform such tasks.

\section{Inference Algorithms}
\label{sec:alg}

The first-order Fisher--Rao interpretation motivates alternating ratio estimation and displacement to bring a starting base model closer to the target samples. 
At iteration $k$, we evaluate the score of the current
base, solve \eqref{eq:sdre} for a field $\hat\vf_k$, estimate $\hat r_k$ by \eqref{eq:ratio-hat}, and construct a candidate layer
$T_k=\mathrm{id}+\eta_k\hat\vf_k$. If the layer passes a validation criterion, we update the distributions and repeat; otherwise we
stop. 

We can carry out this estimation-and-transport loop from two different directions (Figure~\ref{fig:layer-construction}).
Either move the model towards the data, which we call
\emph{push-forward}, or pull the data back towards a simple, interpretable base distribution, which we call \emph{pull-back}. 

\begin{figure*}[t]
  \centering
  \resizebox{\textwidth}{!}{\begingroup
\colorlet{layerink}{black}
\colorlet{layerbg}{white}
\begin{tikzpicture}[
  font=\fontsize{9}{10.8}\selectfont,text=layerink,
  >={Stealth[length=3.5pt,width=3pt]},
  layer/.style={rectangle,rounded corners=1.5pt,minimum width=14mm,minimum height=5.3mm,
    inner sep=1.5pt,draw=layerink!24!layerbg,fill=layerink!4!layerbg,line width=.35pt},
  newlayer/.style={layer,draw=#1,fill=#1!10!layerbg,line width=.9pt},
  endpoint/.style={rectangle,rounded corners=2pt,minimum width=22mm,minimum height=6mm,
    inner sep=2pt,draw=layerink!30!layerbg,fill=layerink!7!layerbg,line width=.4pt},
  maparrow/.style={->,draw=layerink!65!layerbg,line width=.5pt},
  compare/.style={draw=layerink!45!layerbg,line width=.5pt,dash pattern=on 1pt off 2pt},
  small/.style={font=\fontsize{8}{9.6}\selectfont},
  secondary/.style={small,text=layerink!72!layerbg},
  grouping/.style={decorate,decoration={brace,amplitude=3pt},draw=layerink!60!layerbg,line width=.45pt}
]
\path[use as bounding box] (0,-.08) rectangle (17.145,4.10);

\node[anchor=west,font=\fontsize{9}{10.8}\selectfont\bfseries] at (.18,3.88) {(a) Push-forward};
\node[anchor=east] at (16.95,3.88) {$\Phi_k={\color{pushc}T_k}\circ\Phi_{k-1}$};
\node[anchor=west,font=\fontsize{9}{10.8}\selectfont\bfseries] at (.18,1.80) {(b) Pull-back};
\node[anchor=east] at (16.95,1.80) {$\Phi_k=\Phi_{k-1}\circ{\color{pullc}T_k}$};

\node[endpoint] (pprior) at (1.40,2.95) {Prior $\pi$};
\node[layer] (pbase) at (3.70,2.95) {$\Phi_0$};
\node[layer] (pfirst) at (5.25,2.95) {$T_1$};
\node[inner sep=0pt] (pdots) at (6.38,2.95) {$\cdots$};
\node[layer] (pprev) at (7.50,2.95) {$T_{k-1}$};
\node[newlayer=pushc] (pnew) at (9.15,2.95) {$T_k$};
\node[inner sep=2pt] (pout) at (11.05,2.95) {$q_{k+1}$};
\node[endpoint] (pdata) at (15.80,2.95) {Data $p$};
\draw[maparrow] (pprior.east) -- (pbase.west);
\draw[maparrow] (pbase.east) -- (pfirst.west);
\draw[maparrow] (pprev.east) -- (pnew.west);
\draw[maparrow] (pnew.east) -- (pout.west);
\draw[compare] (pout.east) -- (pdata.west);
\node[secondary,fill=layerbg,inner sep=2pt] at (13.00,2.95) {match to data};
\node[small] at (9.15,2.43) {new layer};
\draw[grouping] (8.20,2.53) -- (3.00,2.53)
  node[midway,below=4pt] {$\Phi_{k-1}$};
\draw[->,draw=pushc,line width=.9pt] (3.70,3.43) -- (9.85,3.43);
\node[small,fill=layerbg,inner sep=1.5pt] at (6.775,3.64) {Add layers};

\node[endpoint] (qprior) at (1.40,.87) {Prior $q_{\bm\beta}$};
\node[inner sep=2pt] (qout) at (6.15,.87) {$p_{k+1}$};
\node[newlayer=pullc] (qnew) at (8.05,.87) {$T_k^{-1}$};
\node[layer] (qprev) at (9.70,.87) {$T_{k-1}^{-1}$};
\node[inner sep=0pt] (qdots) at (10.825,.87) {$\cdots$};
\node[layer] (qfirst) at (11.95,.87) {$T_1^{-1}$};
\node[layer] (qbase) at (13.50,.87) {$\Phi_0^{-1}$};
\node[endpoint] (qdata) at (15.80,.87) {Data $p$};
\draw[maparrow] (qdata.west) -- (qbase.east);
\draw[maparrow] (qbase.west) -- (qfirst.east);
\draw[maparrow] (qprev.west) -- (qnew.east);
\draw[maparrow] (qnew.west) -- (qout.east);
\draw[compare] (qout.west) -- (qprior.east);
\node[secondary,fill=layerbg,inner sep=2pt] at (4.00,.87) {align / refit};
\node[small] at (8.05,.35) {new layer};
\draw[grouping] (14.20,.45) -- (9.00,.45)
  node[midway,below=4pt] {$\Phi_{k-1}^{-1}$};
\draw[->,draw=pullc,line width=.9pt] (13.50,1.35) -- (7.35,1.35);
\node[small,fill=layerbg,inner sep=1.5pt] at (10.425,1.56) {Add layers};
\end{tikzpicture}
\endgroup}
  \caption{
  Layer construction in the push-forward and pull-back algorithms.
  Push-forward appends $T_k$ to the model output; pull-back applies $T_k^{-1}$ to the current latent samples and refits the base between layers.
  In both panels, $\Phi_k$ maps the latent base to data.}
  \label{fig:layer-construction}
\end{figure*}

\subsection{Push-forward algorithm}
\label{sec:method:sampler}

Suppose our starting base model is a pretrained generative model and we want to adapt it so that it agrees with the target data. Let $\Phi_0$ be a pretrained invertible generator with a simple input distribution $\pi$, such as $\mathcal N(\bm0,I)$. We correct it by appending displacement layers to its output.
After $k-1$ layers, the current base distribution is
\begin{equation}
  q_k=(\Phi_{k-1})_\#\pi,\qquad \Phi_{k-1}=T_{k-1}\circ\dots\circ T_1\circ\Phi_0 ,
  \label{eq:push-model}
\end{equation}
with $q_1=(\Phi_0)_\#\pi$.
At iteration $k$,
we fit \ours{} using the new base $q_k$ and the target samples $\x_i$, then append the new layer at the
output: $\Phi_k=T_k\circ\Phi_{k-1}$. 

Under the conditions of Corollary~\ref{cor:fr}, with $\hat\vf_k$ attaining equality in Proposition~\ref{prop:ratio}, this iteration performs the update $q_{k+1}\propto q_k\hat r_k^{\,\eta_k}$ to first order in $\eta_k$ and the subsequent updates discretize the Fisher--Rao flow towards $p$. The pretrained generator remains fixed, and the accepted layers provide the correction.

At each iteration, the new estimate needs the score of the \emph{current} model $q_k$. When the sampler consists of invertible steps with tractable
Jacobians, we obtain $\log q_k$ by inverting those steps and accumulating their log-determinants, then compute its score by an
adjoint recursion (Appendix~\ref{app:backtrace}). This \emph{back-traced} score includes all corrections so far. 

We keep a layer only when the validation criterion detects a discrepancy between $p$ and $q_k$. We check for $\hat J_k>2\,\mathrm{se}(\hat J_k)$, where
$\mathrm{se}(\hat J_k)=\mathrm{sd}(1/\hat r_k(\x))/\sqrt{n_{\rm va}}$ is the delta-method estimate over the validation samples.
Finally, the procedure returns a corrected sampler whose density is available by change of variables. The pseudocode is given in Algorithm~\ref{alg:push}; Appendix~\ref{app:alg:push} gives further details. 

\begin{algorithm}[!htbp]
\caption{Push-forward, $\Phi\gets\textsc{Push}(\Phi_0,\{\x_i\})$}
\label{alg:push}
\begin{algorithmic}[1]
\Require pretrained invertible sampler $\Phi_0$ with latent distribution $\pi$; target samples $\{\x_i\}$, 
split into training and validation samples; 
maximum number of layers $K$
\For{$k=1,\dots,K$}
  \State Compute $\s_{q_k}(\x_i)$ by back-tracing $\Phi_{k-1}$
  \State $\hat\vf_k\gets$ solve \eqref{eq:sdre} with $q=q_k$ on training samples. 
  \State $T_k\gets\mathrm{id}+\eta_k\hat\vf_k$ \Comment{App.~\ref{app:alg:push}}
  \If{$\hat J_k\le 2\,\mathrm{se}(\hat J_k)$ on the validation samples} 
    \Return $\Phi_{k-1}$
  \EndIf
  \State $\Phi_k\gets T_k\circ\Phi_{k-1}$
\EndFor
\State \Return $\Phi_K$
\end{algorithmic}
\end{algorithm}

\subsection{Pull-back algorithm}
\label{sec:method:stack}

Suppose our goal is to find a latent representation of our data. The latent structure can be described by a simple, interpretable parametric prior $q$ whose log-density and score are known in closed form.

Before layer $k$, the current latent samples are $\bm z_i=\Phi_{k-1}^{-1}(\x_i)$, with distribution $p_k=(\Phi_{k-1}^{-1})_\#p$. We fit a field in this latent space and compose the resulting layer with the current map: $\Phi_k=\Phi_{k-1}\circ T_k$. The updated representation is therefore $\Phi_k^{-1}(\x_i)=T_k^{-1}(\bm z_i)$.

If the prior $q_{\bm\beta}$ has unknown parameters, we also need to align the prior and the current representation at every iteration. First, we fit $\bm\beta_k$ on the current latent samples $\bm z_i$. We then solve \eqref{eq:sdre} with base $q_{\bm\beta_k}$ and $\bm z_i$, and pull the data back
through the resulting layer: $\bm z_i\gets T_k^{-1}(\bm z_i)$ and update the forward map 
$\Phi_k=\Phi_{k-1}\circ T_k$. 
Under the conditions of Corollary~\ref{cor:fr}, with $\bm\beta_k$ held fixed, the forward move of the base is
$(T_k)_\#q_{\bm\beta_k}\propto q_{\bm\beta_k}\hat r_k^{\,\eta_k}$ to first order, where $\hat r_k$ estimates
$p_k/q_{\bm\beta_k}$.
Thus the same Fisher--Rao step is used in the latent space, interleaved with the prior update.
The score needed for \eqref{eq:sdre} is evaluated directly from $q_{\bm\beta_k}$, without back-tracing the full map.

We accept or reject layer $k$, chosen among several candidates (Appendix~\ref{app:alg:pull}), by the validation likelihood of the transformation model with $\bm\beta=\bm\beta_k$,
\begin{equation}
  \log\hat p(\x)=\log q_{\bm\beta}\big(\Phi_k^{-1}(\x)\big)+\log\big|\det\nabla\Phi_k^{-1}(\x)\big| .
  \label{eq:flow-lik}
\end{equation}
Algorithm~\ref{alg:stack} summarizes the procedure. Identifiability requires restricting the transformation: a sufficiently flexible $\Phi$ can match the prior without recovering the intended latent structure (Appendix~\ref{app:alg:pull}).

\begin{algorithm}[!htbp]
\caption{Pull-back, $(\Phi,\bm\beta)\gets\textsc{Pull}(\Phi_0,\{\x_i\})$}
\label{alg:stack}
\begin{algorithmic}[1]
\Require starting map $\Phi_0$; target samples $\{\x_i\}$, split into training and validation samples; base family $q_{\bm\beta}$;
maximum number of layers $K$
\State $\bm z_i\gets\Phi_0^{-1}(\x_i)$ for all target samples
\For{$k=1,\dots,K$}
  \State $\bm\beta_k\gets\arg\max_{\bm\beta}\sum_{\rm train}\log q_{\bm\beta}(\bm z_i)$
  \State Solve \eqref{eq:sdre} with $q=q_{\bm\beta_k}$ on training $\bm z_i$
  \State $T_k\gets\mathrm{id}+\eta_k\hat\vf_k$ \Comment{App.~\ref{app:alg:pull}}
  \If{$T_k$ does not increase the validation likelihood \eqref{eq:flow-lik}} \Return $(\Phi_{k-1},\bm\beta_k)$
  \EndIf
  \State $\bm z_i\gets T_k^{-1}(\bm z_i)$ for all latent samples
  \State $\Phi_k\gets\Phi_{k-1}\circ T_k$
\EndFor
\State $\bm\beta\gets\arg\max_{\bm\beta}\sum_{\rm train}\log q_{\bm\beta}(\bm z_i)$
\State \Return $(\Phi_K,\bm\beta)$
\end{algorithmic}
\end{algorithm}

\subsection{Choosing between push-forward and pull-back}
\label{sec:alg:choice}

The choice depends on the objective of inference. Push-forward corrects a model to match the data. Pull-back transforms the data to fit an interpretable prior. Their acceptance rules
reflect these goals: push-forward changes the model when the criterion detects a discrepancy, whereas pull-back keeps a layer
when it improves the validation likelihood.

Their computational costs also differ. Push-forward recomputes the score through the full sampler, including all previous layers.
Pull-back evaluates the prior score directly, carrying forward the latent samples and accumulated
log-determinants. In the next section, we use push-forward for guidance of a pretrained posterior sampler in simulation-based
inference, and pull-back for source recovery in nonlinear ICA.

\section{Applications}
\label{sec:app}

In this section, we examine the proposed algorithms (\textsc{Push, Pull}) on simulation-based inference and nonlinear ICA tasks. Specifically, we use push-forward to correct a posterior sampler and pull-back to recover independent sources, from scratch or by refining pretrained models. 

We use conditional variants of \textsc{Push} and \textsc{Pull}. We apply \eqref{eq:sdre} to joint target samples,
with the Stein operator acting only on the transported variable (Appendix~\ref{app:cond}).
In the experiments, \ours{} denotes our methods.

\subsection{Correcting a posterior sampler in simulation-based inference}
\label{sec:exp:sbi}

\begin{table*}[!t]\centering\small
\caption{Posterior correction in simulation-based inference: C2ST against reference posteriors
($0.5$ = indistinguishable; mean $\pm$ std). Best per column in bold, second best underlined (reference rows excluded).
$^\ast$No tractable density. $^\dagger$Hyperparameters selected by test C2ST.
$\times$: not applicable; --: not evaluated. The last row reports retraining on new target simulations as a reference.
Experimental details are in Appendix~\ref{app:sbi}.}
\label{tab:sbi}
\setlength{\tabcolsep}{3.6pt}
\begin{tabular}{l ccc cc cc}
\toprule
 & \multicolumn{3}{c}{Prior change} & \multicolumn{2}{c}{Simulator change} & \multicolumn{2}{c}{Partial coverage} \\
\cmidrule(lr){2-4}\cmidrule(lr){5-6}\cmidrule(lr){7-8}
Method & unimodal & bimodal & GLM & $n_{\rm real}{=}500$ & $n_{\rm real}{=}2000$ & covered & uncovered \\
\midrule
Frozen NPSE & 0.724{\scriptsize$\pm$.001} & 0.760{\scriptsize$\pm$.001} & 0.619{\scriptsize$\pm$.002} & 0.619{\scriptsize$\pm$.004} & 0.619{\scriptsize$\pm$.004} & 0.632{\scriptsize$\pm$.001} & 0.638{\scriptsize$\pm$.003} \\
\midrule
PriorGuide, ODE & 0.638{\scriptsize$\pm$.028} & 0.634{\scriptsize$\pm$.017} & 0.665{\scriptsize$\pm$.014} & $\times$ & $\times$ & $\times$ & $\times$ \\
PriorGuide + Langevin$^{\ast\dagger}$ & \textbf{0.511}{\scriptsize$\pm$.002} & \textbf{0.512}{\scriptsize$\pm$.001} & \textbf{0.543}{\scriptsize$\pm$.020} & $\times$ & $\times$ & $\times$ & $\times$ \\
Classifier DRE + SIR$^\ast$ & 0.816{\scriptsize$\pm$.005} & 0.862{\scriptsize$\pm$.003} & 0.717{\scriptsize$\pm$.005} & 0.722{\scriptsize$\pm$.009} & 0.721{\scriptsize$\pm$.007} & 0.727{\scriptsize$\pm$.004} & 0.759{\scriptsize$\pm$.007} \\
DG + Langevin$^\ast$ & 0.531{\scriptsize$\pm$.005} & \underline{0.555}{\scriptsize$\pm$.002} & 0.559{\scriptsize$\pm$.005} & \underline{0.589}{\scriptsize$\pm$.002} & \textbf{0.575}{\scriptsize$\pm$.004} & \underline{0.607}{\scriptsize$\pm$.009} & 0.859{\scriptsize$\pm$.036} \\
Fine-tuned NPSE & 0.582{\scriptsize$\pm$.007} & 0.629{\scriptsize$\pm$.024} & 0.564{\scriptsize$\pm$.006} & 0.620{\scriptsize$\pm$.004} & 0.596{\scriptsize$\pm$.003} & 0.615{\scriptsize$\pm$.009} & \underline{0.630}{\scriptsize$\pm$.008} \\
Fine-tuned, sim.\ + real & $\times$ & $\times$ & $\times$ & -- & -- & 0.612{\scriptsize$\pm$.007} & 0.633{\scriptsize$\pm$.002} \\
Frozen NPSE + \ours{} & \underline{0.523}{\scriptsize$\pm$.004} & 0.573{\scriptsize$\pm$.005} & \underline{0.552}{\scriptsize$\pm$.004} & \textbf{0.583}{\scriptsize$\pm$.006} & \underline{0.577}{\scriptsize$\pm$.003} & \textbf{0.590}{\scriptsize$\pm$.006} & \textbf{0.598}{\scriptsize$\pm$.007} \\
\midrule
Retrained on $10^5$ target sim. & 0.511{\scriptsize$\pm$.001} & 0.510{\scriptsize$\pm$.002} & 0.531{\scriptsize$\pm$.001} & 0.558{\scriptsize$\pm$.003} & 0.558{\scriptsize$\pm$.003} & 0.561{\scriptsize$\pm$.004} & 0.571{\scriptsize$\pm$.003} \\
\bottomrule
\end{tabular}
\end{table*}

In simulation-based inference, a parameter $\bm\theta$ is drawn from a prior $q(\bm\theta)$ and passed through a simulator
$q(\bm y\,|\,\bm\theta)$ to produce an observation $\bm y$. An amortized posterior sampler trained on these pairs can then generate samples for unseen 
observations. If the prior changes to $p(\bm\theta)$ or the simulator differs from the target process
$p(\bm y\,|\,\bm\theta)$, the pretrained model now targets the wrong posterior. We want to adapt our pretrained model $q(\bm\theta\,|\,\bm y)$ to $p(\bm\theta\,|\,\bm y)$ using target
pairs $(\bm\theta_i,\bm y_i) \sim p$ while keeping the pretrained network parameters fixed.

We consider the pretrained sampler $q$ to be a diffusion sampler \citep{song2021score,sharrock2024sequential}. Since diffusion sampling is a multi-level procedure, 
we apply $\textsc{Push}$ to correct its deterministic predictor steps at all noise levels, with a conditional field $\vf(\bm\theta,\bm y)$.  
At noise level $t$, the current sampler is compared with the target noised to the same level: the target parameters become
$a_t\bm\theta_i+\sigma_t\bm\xi_i$, where $a_t$ and $\sigma_t$ are the signal and noise scales and $\bm\xi_i\sim\mathcal N(\bm0,I)$; the observations are unchanged. The score is
back-traced through the predictor steps and all accepted corrections. The result is a sampler corrected for the target prior and simulator with a density available by change of variables at all steps.
Appendix~\ref{app:push-diffusion} gives the full procedure.

We use a neural posterior score estimator (NPSE) as the starting base model and adapt it on two \texttt{sbibm} tasks \citep{lueckmann2021benchmarking}: \texttt{gaussian\_linear} and
\texttt{bernoulli\_glm} (GLM), under three shifts (Table~\ref{tab:sbi}). \emph{Prior change}: the prior of \texttt{gaussian\_linear} is shifted or replaced
by a bimodal mixture; the GLM prior has a shifted bias and smoother filter. Target pairs are resampled from existing simulations
with weights proportional to $p(\bm\theta)/q(\bm\theta)$, without new simulations. \emph{Simulator change}: target pairs are generated
by a GLM with a steeper spiking nonlinearity. \emph{Partial coverage}: $10^3$ target pairs from this modified simulator cover only
negative-bias neurons; we evaluate on both covered and uncovered regions. We measure posterior accuracy by the classifier two-sample test (C2ST) against reference
samples ($0.5$ means indistinguishable). Retraining on $10^5$ new target simulations provides a reference.

We compare with PriorGuide, with and without Langevin refinement \citep{yang2025priorguide}, classifier-based DRE with
sampling--importance resampling (SIR) \citep{bickel2009discriminative}, discriminator guidance with Langevin refinement
\citep{kim2023refining}, and fine-tuning on target pairs, alone or mixed with simulations. PriorGuide applies only to prior changes;
methods marked $^\ast$ in Table~\ref{tab:sbi} have no tractable density. A tractable density can be useful in downstream SBI applications.  Appendix~\ref{app:sbi} gives the experimental setup,
baseline implementations and evaluation protocol.

For a unimodal prior change, the correction reduces C2ST from $0.724$ to $0.523$, close to retraining on new
target simulations ($0.511$), while using only the existing simulations. PriorGuide with Langevin refinement achieves lower mean C2ST in all three
prior-change settings, although its advantage over \ours{} is statistically significant only in the bimodal setting. PriorGuide with Langevin refinement uses
hyperparameters selected by test C2ST and does not retain a tractable density. Under simulator misspecification,
$n_{\rm real}\in\{500,2000\}$ target pairs improve the frozen sampler, with accuracy comparable to discriminator guidance and better than
fine-tuning. With partial coverage, the correction improves accuracy in both the covered and uncovered regions. Table~\ref{tab:sbi} gives the comparison; the experimental setup and statistical tests are in Appendix~\ref{app:sbi}.

\subsection{Recovering sources in nonlinear ICA}
\label{sec:exp:ica}

\begin{table*}[!t]\centering\small
\caption{Nonlinear ICA: validation MCC (mean $\pm$ std). Top: standalone estimators; bottom: pretrained models refined by \ours{}.
Best per column within each block is bold, excluding the starred standalone row. $^\ast$Uses oracle knowledge of the mixing architecture.
Experimental details are in Appendix~\ref{app:details}.}
\label{tab:main}
\begin{tabular}{l cc cc cc}
\toprule
 & \multicolumn{2}{c}{Synthetic ($d{=}5$)} & \multicolumn{2}{c}{CMU ARCTIC} & \multicolumn{2}{c}{LibriSpeech} \\
\cmidrule(lr){2-3}\cmidrule(lr){4-5}\cmidrule(lr){6-7}
Method & $n_c{=}200$ & $n_c{=}1000$ & $d{=}3$ & $d{=}5$ & $d{=}3$ & $d{=}5$ \\
\midrule
iVAE & 0.484{\scriptsize$\pm$0.045} & 0.492{\scriptsize$\pm$0.067} & 0.538{\scriptsize$\pm$0.090} & 0.383{\scriptsize$\pm$0.034} & 0.578{\scriptsize$\pm$0.131} & 0.461{\scriptsize$\pm$0.069} \\
GIN & 0.602{\scriptsize$\pm$0.087} & 0.635{\scriptsize$\pm$0.055} & \textbf{0.645}{\scriptsize$\pm$0.084} & 0.440{\scriptsize$\pm$0.037} & 0.575{\scriptsize$\pm$0.077} & 0.438{\scriptsize$\pm$0.037} \\
\ours{} (from scratch) & \textbf{0.689}{\scriptsize$\pm$0.061} & \textbf{0.673}{\scriptsize$\pm$0.062} & 0.621{\scriptsize$\pm$0.150} & \textbf{0.518}{\scriptsize$\pm$0.047} & \textbf{0.770}{\scriptsize$\pm$0.068} & \textbf{0.537}{\scriptsize$\pm$0.026} \\
OSF$^\ast$ & 0.795{\scriptsize$\pm$0.068} & 0.762{\scriptsize$\pm$0.084} & 0.854{\scriptsize$\pm$0.024} & 0.747{\scriptsize$\pm$0.046} & 0.850{\scriptsize$\pm$0.014} & 0.736{\scriptsize$\pm$0.046} \\
\midrule
GIN + \ours{} & 0.660{\scriptsize$\pm$0.104} & 0.687{\scriptsize$\pm$0.066} & 0.747{\scriptsize$\pm$0.090} & 0.587{\scriptsize$\pm$0.081} & 0.588{\scriptsize$\pm$0.056} & 0.467{\scriptsize$\pm$0.043} \\
OSF$^\ast$ + \ours{} & \textbf{0.866}{\scriptsize$\pm$0.068} & \textbf{0.887}{\scriptsize$\pm$0.049} & \textbf{0.883}{\scriptsize$\pm$0.007} & \textbf{0.805}{\scriptsize$\pm$0.031} & \textbf{0.907}{\scriptsize$\pm$0.010} & \textbf{0.811}{\scriptsize$\pm$0.019} \\
\bottomrule
\end{tabular}
\end{table*}

\begin{table*}[!t]\centering\small
\caption{Refinement versus extended training: validation MCC (seed means). Extended training uses $4\times$ the original training steps;
``wins'' counts seeds on which refinement beats extended training. Best per column within each model is bold.
Experimental details are in Appendix~\ref{app:details}.}
\label{tab:control}
\setlength{\tabcolsep}{5.8pt}
\begin{tabular}{l cccccc c}
\toprule
 & \multicolumn{2}{c}{Synthetic} & \multicolumn{2}{c}{ARCTIC} & \multicolumn{2}{c}{LibriSpeech} & \\
 & $n_c{=}200$ & $n_c{=}1000$ & $d{=}3$ & $d{=}5$ & $d{=}3$ & $d{=}5$ & wins \\
\midrule
GIN & 0.602 & 0.635 & 0.645 & 0.440 & 0.575 & 0.438 &  \\
\quad Extended training & 0.619 & 0.658 & 0.623 & 0.433 & 0.587 & 0.445 &  \\
\quad + \ours{} & \textbf{0.660} & \textbf{0.687} & \textbf{0.747} & \textbf{0.587} & \textbf{0.588} & \textbf{0.467} & 19/22 \\
\midrule
OSF & 0.795 & 0.762 & 0.854 & 0.747 & 0.850 & 0.736 &  \\
\quad Extended training & 0.837 & 0.856 & 0.858 & 0.771 & 0.833 & 0.790 &  \\
\quad + \ours{} & \textbf{0.866} & \textbf{0.887} & \textbf{0.883} & \textbf{0.805} & \textbf{0.907} & \textbf{0.811} & 20/22 \\
\bottomrule
\end{tabular}
\end{table*}

In independent component analysis (ICA), 
observations are  $\x=\bm f(\bm z)$ for an invertible mixing $\bm f$ and latent sources $\bm z=(z_1,\dots,z_d)$. We want to recover $\bm z$ from observations $\bm x$.
In auxiliary variable ICA, we assume that the sources are independent given a time segment $\bm c$,
$p(\bm z\,|\,\bm c)=\prod_j p_j(z_j\,|\,\bm c)$. Under suitable regularity conditions across segments, the inverse mixing is
identifiable up to a permutation and componentwise invertible transformations
\citep{hyvarinen2016unsupervised,hyvarinen2019nonlinear,khemakhem2020variational}. The goal is to recover these sources from the
observations and segment labels.

Since the latent signals are conditionally independent, we use a factorized conditional base
$q_{\bm\beta}(\bm z\,|\,\bm c)$. Its parameters vary across segments, while the field and hence the learned transformation are shared across segments.
The sources are read off as $\Phi^{-1}(\x)$. We use Gaussian or smooth super-Gaussian
bases and restrict the fields to structured classes that are linear in their parameters (Appendix~\ref{app:fields}). Each candidate field is therefore fitted by a convex
problem, and the restrictions express assumptions about the mixing. We also test two different starting configurations: starting from
$\Phi_0=\mathrm{id}$ learns the model \textbf{from scratch}; starting from a pretrained model performs \textbf{refinement} by appending
layers to its existing transformation.

Our experiments use synthetic Gaussian sources and real speech recordings from CMU ARCTIC \citep{kominek2004cmu} and LibriSpeech \citep{panayotov2015librispeech}, all mixed by a random
invertible nonlinear network (Appendix~\ref{app:details}). The two synthetic settings use $n_c\in\{200,1000\}$ samples per segment.
We measure recovery by the mean correlation coefficient (MCC) between estimated and true sources
after optimal matching. 

We compare with three estimators trained end to end by maximum likelihood or its bound. iVAE \citep{khemakhem2020variational} is a
variational autoencoder whose prior on the latent variables is conditioned on the segment label; its exact likelihood is intractable. GIN
\citep{sorrenson2020disentanglement} is a volume-preserving normalizing flow with a conditional Gaussian base. The oracle-structured flow (OSF) uses oracle knowledge of the mixing structure,
which is not available in practice (Appendix~\ref{app:details}). We refine GIN and OSF, both with tractable exact likelihoods.

As Table~\ref{tab:main} shows, refinement with \ours{} improves the mean MCC of every pretrained model in all six
settings. For example, OSF improves from $0.762$ to $0.887$ on the synthetic benchmark ($n_c=1000$). To check whether this gain merely reflects additional training, we also train each model for four times the original number of steps (extended training).
Table~\ref{tab:control} shows that refinement with \ours{} gives higher MCC in 39 of 44 paired runs. These comparisons support using
pull-back to improve pretrained representations.

Trained from scratch, \ours{} achieves the highest mean MCC among the evaluated estimators without oracle knowledge in five of the six
settings. OSF, which has access to the mixing architecture, achieves higher mean MCC in all six settings. Appendix~\ref{app:details} gives
the data, models and detailed comparisons; Appendix~\ref{app:results} examines stopping rules, selection among refined models and further
datasets.

\section{Conclusion and limitations}
\label{sec:conclusion}
\label{sec:exp:limits}

We propose \ours{}, which models the log density ratio as minus the Stein operator applied to a displacement field, up to a normalizing constant. This connects two views of distribution shift. A convex fit estimates both the ratio and a KL lower bound. When the population bound is attained, the estimated field gives a first-order Fisher--Rao update. Repeating the fit-and-update step yields push-forward and pull-back inference algorithms, which we evaluate on posterior correction in SBI and source recovery in nonlinear ICA.

Our method has theoretical and computational limitations. Moving mass between separated modes may require complex displacement fields. Restricting $\mathcal V$ can limit such transport, whereas highly flexible, unregularized fields can make ratio estimation numerically unstable in high dimensions. Push-forward also requires back-tracing the score through all previous corrections, so its cost grows with the number of layers.

Due to space constraints, we defer a detailed discussion of related work to Section~\ref{sec:related}.
There, we compare SDRE with existing density ratio estimators, discuss its transport-based approximation of Fisher--Rao dynamics in relation to kernel-based approaches, and contrast greedy, layer-wise pull-back estimation with end-to-end nonlinear ICA methods such as GIN \citep{sorrenson2020disentanglement}.

\clearpage

\subsection*{AI Use Statement}
In this work, we used generative AI tools for writing and running experiment code, 
including the implementation of \textsc{Push} and \textsc{Pull} and the baselines, for orchestrating experiments on HPC, and for drafting and proofreading the manuscript, structuring tables and formatting the bibliography. The tools also assisted with mathematical derivations and the writing and checking of proofs. They also took part in discussions of experimental design and the interpretation of experimental results, and some experimental design choices (the rank-$d$ field class, the refinement protocol and the multi-start selection) were first suggested in such discussions and then tested by the authors.
We have reviewed all AI-assisted work:
every reported number was checked against the experiment logs, and the code, proofs and references were reviewed by the authors. 
We take responsibility for the 
final content of this work, including text, claims, or artifacts produced with the aid of generative AI.

\bibliography{references}

\clearpage
\appendix
\thispagestyle{empty}
\onecolumn
\aistatstitle{\papertitle: \\ Supplementary Materials}
\section{Proofs}
\label{app:proofs}

\begin{proof}[Proof of Proposition~\ref{prop:ratio}]
Write $f=\A_q\vf$ and $Z=\E_p[e^f]$. If $Z=\infty$, then $\Lv(\vf)=-\infty$ and the inequality is immediate.
Otherwise, $0<Z<\infty$ and the tilted density $p_{\vf}=p\,e^f/Z$ is well defined. Stein's identity gives
\begin{align*}
  \KL(q\,\|\,p_{\vf})
  &=\E_q\!\left[\log\frac{q}{p}-f+\log Z\right]\\
  &=\KL(q\,\|\,p)+\log Z
   =\KL(q\,\|\,p)-\Lv(\vf)\ge0.
\end{align*}
Equality holds exactly when $p_{\vf}=q$ almost everywhere, or equivalently
$f=\log(q/p)+\log Z$ for $p$-almost every $\x$.
Taking the $q$-expectation yields $0=\KL(q\,\|\,p)+\log Z$.
Thus equality is equivalent to $-\A_q\vf=\log r+\KL(q\,\|\,p)$, as claimed.
\end{proof}

\begin{proof}[Proof of Proposition~\ref{prop:displacement}]
Let $L=\sup_{\x}\|\nabla\vf(\x)\|_{\mathrm{op}}<\infty$ and take $\eta>0$ with $\eta L<1$.
For each $\x'$, the map $\x\mapsto\x'-\eta\vf(\x)$ is a contraction on $\R^d$, so it has a unique fixed point $T_\eta^{-1}(\x')$.
The Jacobian $I+\eta\nabla\vf$ is nonsingular and has positive determinant, since it varies continuously from $I$ without becoming singular.
The inverse function theorem therefore makes $T_\eta$ a diffeomorphism.

Fix $\x'$ and put $\x=T_\eta^{-1}(\x')$. The inverse equation and the Lipschitz bound give
$\|\x-\x'\|\le\eta\|\vf(\x')\|/(1-\eta L)=O(\eta)$, and hence
\[
  \x=\x'-\eta\vf(\x)=\x'-\eta\vf(\x')+O(\eta^2).
\]
Expanding the log-density and log-determinant separately gives
\begin{align*}
  \log q(\x)
   &=\log q(\x')-\eta\,\s_q(\x')^\T\vf(\x')+O(\eta^2),\\
  \log\det(I+\eta\nabla\vf(\x))
   &=\eta\,\nabla\!\cdot\vf(\x')+O(\eta^2).
\end{align*}
Subtracting these expressions in the change-of-variables formula yields
\[
  \log(T_\eta)_\#q(\x')
  =\log q(\x')-\eta\,(\A_q\vf)(\x')+O(\eta^2).
\]
All remainders are pointwise in $\x'$; their constants may depend on $\x'$.
\end{proof}

\begin{proof}[Proof of Corollary~\ref{cor:fr}]
Combining Propositions~\ref{prop:ratio} and~\ref{prop:displacement} gives
\[
  \log(T_\eta)_\#q
  =\log q+\eta\big[\log r+\KL(q\,\|\,p)\big]+O(\eta^2).
\]
At $\rho_0=q$, the Fisher--Rao log-density velocity is
\[
  \left.\partial_t\log\rho_t\right|_{t=0}
  =-\log(q/p)+\KL(q\,\|\,p)
  =\log r+\KL(q\,\|\,p).
\]
Thus the displacement agrees with an explicit Euler step for this velocity up to $O(\eta^2)$.

The geometric interpolation has the same first-order change after normalization.
To see this, write $\widetilde q_\eta=q\,r^\eta/Z_\eta$, where $Z_\eta=\E_q[r^\eta]$ and $Z_0=1$.
Multiplying by $r^\eta$ adds $\eta\log r$ to the log-density; we only need to account for the normalizing constant.
Its change at $\eta=0$ is
\[
  \left.\frac{d}{d\eta}\log Z_\eta\right|_{\eta=0^+}
  =\E_q[\log r]=-\KL(q\,\|\,p).
\]
This derivative is well defined under the finite-KL assumption: $\E_q|\log r|<\infty$, since $\log r\le r$ when $r\ge1$.
Also, for $0\le\eta\le1/2$, $|r^\eta\log r|$ is bounded by a constant times $r+|\log r|$, so we may differentiate inside the expectation.

Thus $\log Z_\eta=-\eta\,\KL(q\,\|\,p)+o(\eta)$, and normalizing gives
\[
  \log\widetilde q_\eta
  =\log q+\eta\big[\log r+\KL(q\,\|\,p)\big]+o(\eta).
\]
The coefficient of $\eta$ is exactly the Fisher--Rao velocity above.
\end{proof}

\section{Conditional distributions}
\label{app:cond}

Both applications in Section~\ref{sec:app} are conditional. Let $\bm c$ be a conditioning variable with distribution $p(\bm c)$, and let
$q(\cdot\,|\,\bm c)$ and $p(\cdot\,|\,\bm c)$ be a base and a target distribution for every $\bm c$. The Stein operator acts on the
transported variable for each fixed $\bm c$,
\begin{equation*}
  (\A_q\vf)(\x,\bm c)=\s_q(\x\,|\,\bm c)^\T\vf(\x,\bm c)+\nabla_{\x}\!\cdot\vf(\x,\bm c),\qquad \s_q(\x\,|\,\bm c)=\nabla_{\x}\log q(\x\,|\,\bm c),
\end{equation*}
and Stein's identity $\E_{q(\cdot|\bm c)}[(\A_q\vf)(\x,\bm c)]=0$ holds for every $\bm c$. Given joint target samples
$(\x_i,\bm c_i)\sim p(\bm c)\,p(\x\,|\,\bm c)$, the estimator \eqref{eq:sdre} is applied with $(\A_q\vf)(\x_i,\bm c_i)$ in place of
$(\A_q\vf)(\x_i)$, and the population Stein--DV objective lower-bounds the expected divergence,
\begin{equation}
  -\log\E_{p(\x,\bm c)}\big[e^{(\A_q\vf)(\x,\bm c)}\big]\le\E_{p(\bm c)}\KL\big(q(\cdot\,|\,\bm c)\,\|\,p(\cdot\,|\,\bm c)\big).
  \label{eq:cond-bound}
\end{equation}
Indeed, \eqref{eq:stein-dv} applied for each $\bm c$ gives $\E_{p(\x|\bm c)}[e^{(\A_q\vf)(\x,\bm c)}]\ge e^{-\KL_{\bm c}}$ with
$\KL_{\bm c}=\KL(q(\cdot|\bm c)\|p(\cdot|\bm c))$, and averaging over $p(\bm c)$ and applying Jensen's inequality,
$\log\E_{p(\bm c)}e^{-\KL_{\bm c}}\ge-\E_{p(\bm c)}\KL_{\bm c}$, gives \eqref{eq:cond-bound}. A single target sample per conditioning value
therefore suffices to form the empirical objective, and no samples from the base are needed.
Under the assumptions of Proposition~\ref{prop:displacement}, its expansion holds at each fixed $\bm c$.
The Fisher--Rao conclusion of Corollary~\ref{cor:fr} additionally requires equality in the conditional DV bound at that $\bm c$.
A restricted field shared across conditions need not attain these equalities.
Since a single $\hat J$ is shared across conditions, \eqref{eq:ratio-hat} provides a pooled normalization; condition-specific constants remain. In simulation-based inference the transported variable is the parameter $\bm\theta$, the conditioning variable is
the observation $\bm y$, and the field depends on both; in nonlinear ICA only the base depends on the segment label $\bm c$, and the
field is shared across segments.

\section{Details of the inference algorithms}
\label{app:alg}

This appendix collects the details of the two algorithms of Section~\ref{sec:alg}. Each layer is $T_k=\mathrm{id}+\eta_k\hat\vf_k$,
where $\hat\vf_k$ solves \eqref{eq:sdre} for the current pair of distributions.
When the population bound is attained, Corollary~\ref{cor:fr} identifies the layer with a first-order Fisher--Rao update.
The step size must also keep $T_k$ invertible, which
$\eta_k\sup\|\nabla\hat\vf_k\|<1$ guarantees.

Figure~\ref{fig:algos} summarizes the iteration loops.

\begin{figure}[!htbp]
  \centering
  \begin{tikzpicture}[
  font=\fontsize{8}{9.6}\selectfont, >={Stealth[length=4pt]},
  box/.style={draw=#1, fill=#1!7, rounded corners=2pt, align=flush center, inner sep=2.5pt, text width=21mm,
              minimum height=9mm, line width=0.5pt},
  est/.style={box=#1, fill=#1!20, line width=1.1pt},
  test/.style={draw=#1, fill=white, chamfered rectangle, chamfered rectangle xsep=1.5mm, align=flush center, inner sep=2pt,
               text width=17mm, minimum height=9mm, line width=0.5pt},
  rowlab/.style={text=#1, font=\fontsize{9}{11}\selectfont},
  flow/.style={->, draw=black!60, line width=0.5pt},
  yn/.style={font=\fontsize{7}{8.4}\selectfont, text=black!60}]
\def\yP{0} \def\yQ{-1.9}
\def\Xa{1.15} \def\Xb{4.0} \def\Xc{6.85} \def\Xd{9.7} \def\Xe{12.55} \def\Xf{15.4}
\node[box=pushc]  (P1) at (\Xa, \yP) {sampler $\Phi_0$,\\ target $\{\x_i\}$};
\node[rowlab=pushc, anchor=west] at ([xshift=-2pt]P1.west |- 0, \yP+0.8) {Push-forward (Alg.~\ref{alg:push})};
\node[box=pushc]  (P2) at (\Xb, \yP) {score of $q_k$\\ by back-tracing};
\node[est=pushc]  (P3) at (\Xc, \yP) {estimate ratio\\ by \ours{} \eqref{eq:sdre}};
\node[box=pushc]  (P4) at (\Xd, \yP) {push the model:\\ $\Phi_k=T_k\circ\Phi_{k-1}$};
\node[test=pushc] (P5) at (\Xe, \yP) {keep layer?\\ $\hat J_k>2\,\mathrm{se}$};
\node[box=pushc]  (P6) at (\Xf, \yP) {corrected\\ sampler $\Phi$};
\node[box=pullc]  (Q1) at (\Xa, \yQ) {data $\{\x_i\}$,\\ $\bm z_i=\Phi_0^{-1}(\x_i)$};
\node[rowlab=pullc, anchor=west] at ([xshift=-2pt]Q1.west |- 0, \yQ-0.8) {Pull-back (Alg.~\ref{alg:stack})};
\node[box=pullc]  (Q2) at (\Xb, \yQ) {score of $q_{\bm\beta_k}$\\ by MLE of $\bm\beta$};
\node[est=pullc]  (Q3) at (\Xc, \yQ) {estimate ratio\\ by \ours{} \eqref{eq:sdre}};
\node[box=pullc]  (Q4) at (\Xd, \yQ) {pull the data:\\ $\bm z_i\leftarrow T_k^{-1}(\bm z_i)$,\\ $\Phi_k=\Phi_{k-1}\circ T_k$};
\node[test=pullc] (Q5) at (\Xe, \yQ) {keep layer?\\ likelihood up};
\node[box=pullc]  (Q6) at (\Xf, \yQ) {flow $\Phi$, $\bm\beta$ with\\ exact likelihood};
\foreach \r in {P, Q} {
  \foreach \a/\b in {1/2, 2/3, 3/4, 4/5} \draw[flow] (\r\a) -- (\r\b);
  \draw[flow] (\r5) -- node[yn, above] {no} (\r6);
}
\draw[flow] (P5.north) -- ++(0, 0.42) -| node[yn, above, pos=0.25] {yes: next layer} (P2.north);
\draw[flow] (Q5.south) -- ++(0, -0.42) -| node[yn, below, pos=0.25] {yes: next layer} (Q2.south);
\end{tikzpicture}
  \caption{Iteration loops for the two inference algorithms of Section~\ref{sec:alg}.
  }
  \label{fig:algos}
\end{figure}

\subsection{Push-forward}
\label{app:alg:push}

\paragraph{Step size and acceptance.} The step size is $\eta_k=\min\{1,1/(2\max_i\|\nabla\hat\vf_k(\x_i)\|)\}$, with the maximum over the
target samples.
This sample-based rule does not by itself guarantee global invertibility. The ridge level $\lambda$ is chosen from a set $\Lambda$ by the
validation value of $\hat J_k$, and $\hat J_k$ and $\mathrm{se}(\hat J_k)$ are then computed on the same validation samples
(Appendix~\ref{app:sbi} gives the values used in our experiments). Because the validation samples also choose $\lambda$, the test
$\hat J_k>2\,\mathrm{se}(\hat J_k)$ is somewhat optimistic. %

\subsection{Pull-back}
\label{app:alg:pull}

\paragraph{Candidate layers.} Every layer is selected by likelihood. For each candidate field class in a set $\mathcal F$
(Appendix~\ref{app:fields}), ridge level in a set $\Lambda$ and step size in a set $H$, we pull the validation samples back through the
candidate layer and evaluate their likelihood \eqref{eq:flow-lik} with $\bm\beta=\bm\beta_k$ (Appendix~\ref{app:pull}). Nonlinear
candidates whose Jacobian determinant falls below a small threshold at a training sample are discarded.
This checks local nonsingularity at the sampled points; it does not establish global injectivity.
A linear layer is globally invertible whenever $\det(I+\eta A)\ne0$. We keep the candidate with the highest validation likelihood,
and we stop when no candidate increases it. After the last layer, $\bm\beta$ is refitted by maximum likelihood on the final $\bm z_i$.
The sets $\mathcal F$, $\Lambda$ and $H$ used for nonlinear ICA are listed in Appendix~\ref{app:details}.

\paragraph{Restricted transports.} Pull-back fits the transformation model by maximum likelihood in a greedy way: every layer is proposed
by \ours{} and selected by validation likelihood. However, this is only informative for a \emph{restricted} class of transports. For an unconditional base, a sufficiently flexible map can compensate for changes in $\bm\beta$, producing the same data density from different latent representations. The data likelihood alone then cannot identify the base parameters or the sources. Conditional structure and restrictions on the shared map can resolve this ambiguity under additional identifiability assumptions. The field classes in $\mathcal F$, the number of layers and the selection by validation likelihood restrict the transport, and they
form part of our model assumption. Push-forward does not have this problem, because the target is to fit a model to the target data points. 

\paragraph{Relation to known constructions.} First, the model is assembled from small ratio estimates between consecutive intermediate
distributions, in the spirit of telescoping density ratio estimation \citep{rhodes2020telescoping}, but the bridges are produced by
transport rather than fixed in advance. Second, at the population level, the first-order term in $\vf$ of the objective corresponding to \eqref{eq:sdre} at layer $k$ is
$\E_{p_k}[\A_{q_{\bm\beta_k}}\vf]$. With the base held at $q_{\bm\beta_k}$, composing the layer $T=\mathrm{id}+\eta\vf$ changes the average
log-likelihood \eqref{eq:flow-lik} of the data by $-\eta\,\E_{p_k}[\A_{q_{\bm\beta_k}}\vf]$ to first order (the score term comes from
$q_{\bm\beta_k}$ and the divergence term from the log-determinant), so minimizing the objective follows the ascent direction of the
likelihood-based normalizing flows of \citet{tabak2010density}. The log-moment  objective also captures higher-order variation beyond this linear term. We also fit a field using a restricted family and use \eqref{eq:jhat} as an empirical estimate of the population lower bound on the remaining divergence, rather than a guaranteed finite-sample bound.

\section{Back-traced score}
\label{app:backtrace}

Let the sampler be a composition of invertible residual maps, $\Phi=G_N\circ\dots\circ G_1$ with $G_j=\mathrm{id}+\bm h_j$, applied to a
latent variable $\bm z$ with latent distribution $\pi$, and let $q=\Phi_\#\pi$. The maps $G_j$ are the predictor steps of the
pretrained sampler and the layers appended so far. Given $\x$, invert the maps one at a time, $\x^{(N)}=\x$ and
$\x^{(j-1)}=G_j^{-1}(\x^{(j)})$, so that $\bm z=\x^{(0)}$; then
\begin{equation}
  \log q(\x)=\log\pi(\x^{(0)})-\sum_{j=1}^N\log\big|\det\big(I+\nabla\bm h_j(\x^{(j-1)})\big)\big|.
  \label{eq:backtrace}
\end{equation}
Differentiating \eqref{eq:backtrace} through the inverses gives the adjoint recursion
\begin{equation*}
  \bm\zeta_0=\nabla\log\pi(\x^{(0)}),\qquad
  \bm\zeta_j=\big(I+\nabla\bm h_j(\x^{(j-1)})\big)^{-\T}\Big[\bm\zeta_{j-1}-\nabla\log\big|\det(I+\nabla\bm h_j)\big|(\x^{(j-1)})\Big],\qquad
  \s_q(\x)=\bm\zeta_N,
\end{equation*}
with $\bm\zeta_0=-\x^{(0)}$ for $\pi=\mathcal N(\bm0,I)$; the recursion keeps one Jacobian in memory at a time. Each $G_j^{-1}$ is computed by
a damped Newton iteration. 
Target samples at which the inversion does not converge (residual of the recovered $\x^{(0)}$ above
$10^{-7}$) are treated as outside the sampler's image and are dropped before the field is fitted.

\section{Pull-back: likelihood and inverses}
\label{app:pull}

At step $k$ of Algorithm~\ref{alg:stack}, each sample carries its current latent value $\bm z_i=\Phi_{k-1}^{-1}(\x_i)$ and the accumulated
log-determinant $D_i=\log|\det\nabla\Phi_{k-1}^{-1}(\x_i)|$, initialized at $\Phi_0$ and updated after an accepted layer as
$D_i\gets D_i-\log|\det\nabla T_k(\bm z_i^{\rm new})|$ with $\bm z_i^{\rm new}=T_k^{-1}(\bm z_i)$. For a candidate layer
$T=\mathrm{id}+\eta\hat\vf$, the validation likelihood after the pull-back is
\begin{equation*}
  \frac{1}{n_{\rm va}}\sum_{i\in\rm va}\Big[\log q_{\bm\beta_k}\big(T^{-1}(\bm z_i)\big)-\log\big|\det\nabla T\big(T^{-1}(\bm z_i)\big)\big|+D_i\Big],
\end{equation*}
and adding no layer corresponds to $T=\mathrm{id}$. A nonlinear candidate is discarded if $\nabla T$ is close to singular at a training sample (determinant or, for the coordinatewise
class, a diagonal entry below a threshold $\delta$; Appendix~\ref{app:details}); a linear candidate is invertible whenever
$\det(I+\eta A)\ne0$. The inverse $T^{-1}(\bm z)$ is computed by the fixed-point iteration $\bm z'\gets\bm z-\eta\hat\vf(\bm z')$, which is
one-dimensional for the rank-one class of Appendix~\ref{app:fields}, and by Newton's method for the rank-$d$ class.

\section{Field classes}
\label{app:fields}

\paragraph{Finite bases.} With $m$ basis fields collected in $\bm\Psi(\x)\in\R^{d\times m}$, a field is $\vf(\x)=\bm\Psi(\x)\bm\alpha$
with coefficients $\bm\alpha\in\R^m$, and the Stein operator is linear in $\bm\alpha$,
\begin{equation}
  (\A_q\vf)(\x)=\bm\phi(\x)^\T\bm\alpha,\qquad \bm\phi(\x)=\bm\Psi(\x)^\T\s_q(\x)+\nabla\!\cdot\bm\Psi(\x),
  \label{eq:stein-features}
\end{equation}
where $\nabla\!\cdot\bm\Psi$ is the column-wise divergence. We call $\bm\phi$ the \emph{Stein features}, following the tradition of \citet{liu2019fisher}. With $\|\vf\|_{\mathcal V}=\|\bm\alpha\|$,
\eqref{eq:sdre} becomes $\min_{\bm\alpha}\log\frac{1}n\sum_ie^{\bm\phi(\x_i)^\T\bm\alpha}+\frac{\lambda}{2}\|\bm\alpha\|^2$, a log-sum-exp of
linear functions plus a quadratic, which is strictly convex and solved by L-BFGS or damped Newton. It is the log-linear KLIEP problem
\eqref{eq:kliep} with the roles of $p$ and $q$ exchanged, in which the linear term, an average over samples of $q$, is replaced by its
expectation, which is zero by Stein's identity.

\paragraph{Classes for pull-back.} With a Gaussian radial basis function $\mathrm k(t,\gamma)=\exp\big(-(t-\gamma)^2/(2\omega^2)\big)$ with
bandwidth $\omega$ on $L$ fixed centers $\gamma_1,\dots,\gamma_L$, we use five classes, each linear in its parameters. In pull-back they
act on the latent variable $\bm z=(z_1,\dots,z_d)$; we index its coordinates, and the random directions, by $l=1,\dots,d$:
\begin{itemize}
  \item \emph{linear} ($d^2$ parameters): $\vf(\bm z)=A\bm z$, with Stein features $\bm\phi(\bm z)=\mathrm{vec}\big(\s_q(\bm z)\bm z^\T+I\big)$;
  \item \emph{coordinatewise} ($dL$): $v_l(\bm z)=\sum_j\alpha_{lj}\mathrm k(z_l,\gamma_j)$ with $\alpha_{lj}\in\R$; diagonal Jacobian,
  reshapes the marginals;
  \item \emph{joint} ($d^2+dL$): linear plus coordinatewise, with separate parameters $A$ and $\alpha_{lj}$;
  \item \emph{rank-one} ($dL$): $\vf(\bm z)=\sum_j\bm\alpha_j\mathrm k(\bm u^\T\bm z,\gamma_j)$, $\bm\alpha_j\in\R^d$, a planar-flow-like layer
  \citep{rezende2015variational} with $\det\nabla T=1+\eta\,\bm u^\T\sum_j\bm\alpha_j\mathrm k'(\bm u^\T\bm z,\gamma_j)$, where $\mathrm k'$ is the
  derivative in the first argument; the problem is convex for a fixed direction $\bm u$, which is chosen among random candidates by
  the value of \eqref{eq:sdre};
  \item \emph{rank-$d$} ($d^2L$): $\vf(\bm z)=\sum_{l=1}^d\sum_j\bm\alpha_{lj}\mathrm k(\bm u_l^\T\bm z,\gamma_j)$ for a random orthogonal
  $U=[\bm u_1,\dots,\bm u_d]$, a nonlinearity in a rotated coordinate system with a full-rank Jacobian.
\end{itemize}
The linear class has no translation term; translations are absorbed by the segment-wise means of the base.

\paragraph{Relation to maximum-likelihood ICA.}
Holding the base $q$ fixed, for the linear field $\vf(\bm z)=A\bm z$,
\[
  \nabla_A\Lv(0)=-\E[\s_q(\bm z)\bm z^\T+I],
\]
where the expectation is over the current pulled-back samples.
A gradient-ascent step from $A=0$ therefore gives
$A=-\alpha\,\E[\s_q(\bm z)\bm z^\T+I]$.
In the linear ICA setting $\bm z=W\x$, pulling back through
$T=\mathrm{id}+\eta\vf$ updates the unmixing matrix as
\[
  W_{\rm new}=(I+\eta A)^{-1}W
  =\bigl(I+\epsilon\,\E[I+\s_q(\bm z)\bm z^\T]\bigr)W
  +O(\epsilon^2),
  \qquad \epsilon=\eta\alpha.
\]
Thus, to first order, this recovers the relative-gradient update
for maximum-likelihood ICA with source score $\s_q$
\citep{amari1996new,cardoso1996equivariant}.
In practice, SDRE solves the regularized optimization problem \eqref{eq:sdre} to obtain the optimal refinement field, rather than taking a single gradient step from \(A=0\).

\paragraph{Gated conditional field for push-forward.} For a conditional sampler (Appendix~\ref{app:cond}) we use an affine part in
$(\bm\theta,\bm y)$ plus a local residual,
\begin{equation}
  \vf(\bm\theta,\bm y)=A\bm\theta+B\bm y+\bm b+g(\bm y)\sum_{j}\bm\alpha_j\,\mathrm k\big((\bm\theta,\bm y),\bm\gamma_j\big),\qquad
  g(\bm y)=\min\{1,\hat p_{\rm kde}(\bm y)/\tau\},
  \label{eq:cond-field}
\end{equation}
with the Gaussian kernel $\mathrm k\big((\bm\theta,\bm y),(\bm\theta',\bm y')\big)=\exp\big(-\|\bm\theta-\bm\theta'\|^2/(2\omega_\theta^2)-\|\bm y-\bm y'\|^2/(2\omega_y^2)\big)$
centered at target pairs $\bm\gamma_j$, a kernel density estimate $\hat p_{\rm kde}$ of the target observations, and $\tau$ a low quantile of $\hat p_{\rm kde}$ on them. The field is linear in $(A,B,\bm b,\bm\alpha)$, so
\eqref{eq:sdre} stays convex, and because the gate depends on $\bm y$ only, the derivatives of the local part in $\bm\theta$ are simply
scaled by $g$. The affine part carries a global correction to observations with few or no target samples; the local part is suppressed there.

\section{Push-forward on a diffusion sampler}
\label{app:push-diffusion}

Algorithm~\ref{alg:push-diffusion} interleaves the predictor steps of a pretrained deterministic diffusion sampler with calls to
$\textsc{Push}$ (Algorithm~\ref{alg:push}). At noise level $t$, the sampler should match the target noised to the same level, whose
samples $\bm\theta_{t,i}=a_t\bm\theta_i+\sigma_t\bm\xi_i$, $\bm\xi_i\sim\mathcal N(\bm0,I)$, come for free. 
The level $t_0$ represents the starting base
distribution $\pi$.

\begin{algorithm}[h]
\caption{Push-forward on a diffusion sampler}
\label{alg:push-diffusion}
\begin{algorithmic}[1]
\Require invertible predictor steps $P_1,\dots,P_S$ of a pretrained sampler, $P_s$ from noise level $t_{s-1}$ to $t_s$, with the starting base
distribution $\pi$; target pairs $\{(\bm\theta_i,\bm y_i)\}$; at most $K$ correction layers per noise level
\State $\Phi\gets\mathrm{id}$
\For{$s=1,\dots,S$}
  \State $\Phi\gets P_s\circ\Phi$ \Comment{predictor step}
  \State $\Phi\gets\textsc{Push}\big(\Phi,\{(a_{t_s}\bm\theta_i+\sigma_{t_s}\bm\xi_i,\bm y_i)\}_i\big)$ with at most $K$ layers
\EndFor
\State \Return $\Phi$; the model $\Phi_\#\pi$ has the exact log-density \eqref{eq:backtrace}
\end{algorithmic}
\end{algorithm}

\clearpage
\section{Nonlinear ICA: experiments and additional results}
\label{app:details}

This appendix gives the experimental details and comparisons for Section~\ref{sec:exp:ica}. In the tables, ``+\,\ours{}'' denotes
pull-back refinement of the pretrained model, and ``\ours{} from scratch'' starts from $\Phi_0=\mathrm{id}$.

\subsection{Data and evaluation}

In our experiments, sources are mixed by a random 4-layer leaky-ReLU network (slope $0.2$, square weight matrices with condition number
below 5), and each observed coordinate is standardized. The label is one of 20 segments, and 80\% of the samples are used for training
and 20\% for validation, on which all numbers are reported. The synthetic benchmark has $d=5$ Gaussian sources with segment-wise means
$\mu\sim U[-3,3]$ and standard deviations $\sigma\sim U[0.1,3]$, $n_c\in\{200,1000\}$ samples per segment, and 5 mixing seeds. The speech
benchmarks use real recordings as sources, resampled to 2\,kHz, 5\,s ($10^4$ samples) per source, segments of 0.25\,s, $d\in\{3,5\}$ and 3
mixing seeds: five speakers of CMU ARCTIC \citep{kominek2004cmu} and five utterances of one LibriSpeech dev-clean speaker
\citep{panayotov2015librispeech}. Speech sources are heavy-tailed and only approximately segment-wise Gaussian, so the conditional
Gaussian model is misspecified there. Identification is measured by the mean correlation coefficient (MCC) after optimal matching of
estimated and true sources.

\paragraph{Recordings and licenses.} CMU ARCTIC speakers bdl, slt, awb, clb, rms (utterances a0001--a0002 concatenated; $d=3$ uses the first three);
LibriSpeech dev-clean speaker 1272, utterances 128104-0004, 135031-0024, 141231-0002, 128104-0009, 135031-0002. Audio is resampled
from 16\,kHz to 2\,kHz with an anti-aliasing filter and the first 5\,s are used. Licenses: LibriSpeech and the Causal Chambers data
(\texttt{lt\_walks\_v1}) are released under CC BY 4.0; the CMU ARCTIC databases are distributed under a permissive free license. The sbibm task definitions are re-implemented from the sbibm benchmark, and the
iVAE baseline is a JAX port of the official implementation; both are released under the MIT license. Our code is submitted with the
paper as supplementary material.

\subsection{Models and estimation}

\paragraph{Conditional base.} We call $\textsc{Pull}$ with a conditional base
$q_{\bm\beta}(\bm z\,|\,\bm c)$ and share the field across segments. The base is either the Gaussian family
$\prod_j\mathcal N\big(z_j;\mu_j(\bm c),\sigma_j^2(\bm c)\big)$, whose parameters $\bm\beta=\{\mu_j(\bm c),\sigma_j(\bm c)\}$ are segment-wise
means and standard deviations, or a smooth super-Gaussian log-cosh family. For the log-cosh family, $\bm\beta$ instead collects segment-wise locations $\mu_j(\bm c)$ and positive scales $\sigma_j(\bm c)$, with density and score
\begin{equation*}
  q_{\bm\beta}(\bm z\mid\bm c)
  =\prod_{j=1}^d\frac{1}{\pi\sigma_j(\bm c)\cosh\!\big((z_j-\mu_j(\bm c))/\sigma_j(\bm c)\big)},
  \qquad
  [\s_{q_{\bm\beta}}(\bm z\mid\bm c)]_j
  =-\frac{1}{\sigma_j(\bm c)}\tanh\!\left(\frac{z_j-\mu_j(\bm c)}{\sigma_j(\bm c)}\right).
\end{equation*}
Here $\sigma_j(\bm c)$ is a scale parameter, not the standard deviation. Both location and scale are fitted by maximum likelihood separately for each coordinate and segment. The fitted map $\Phi$ estimates the mixing $\bm f$, up to the
indeterminacy described in Section~\ref{sec:exp:ica}, and the sources are read off as $\Phi^{-1}(\x)$. The model class is that of GIN-type flows with a conditional base
\citep{sorrenson2020disentanglement}; what differs is the estimator, a sequence of convex problems instead of end-to-end stochastic
gradient descent.

\paragraph{Restricted fields.} We restrict the transport, as Section~\ref{sec:method:stack} requires, to fields that are linear in their parameters and of a few
structured kinds: \emph{linear} fields $\vf(\bm z)=A\bm z$, \emph{coordinatewise} nonlinearities of single coordinates, their sum
(\emph{joint}), and nonlinearities along one (\emph{rank-one}) or $d$ (\emph{rank-$d$}) random directions (Appendix~\ref{app:fields}).
With linear fields, the first-order version of one step of \eqref{eq:sdre} is the relative (natural) gradient update of
maximum-likelihood ICA with source score $\s_{q_{\bm\beta}}$ \citep{amari1996new,cardoso1996equivariant}; solving \eqref{eq:sdre} exactly
instead gives a convex Fisher-scoring-type update (Appendix~\ref{app:fields}).

\paragraph{Estimators.}
We compare five standalone estimators.
The first is \emph{iVAE} \citep{khemakhem2020variational},
using our JAX port of the official implementation
(MLPs with two hidden layers of 200 units each and an output layer, $6\cdot10^3$ training steps).
The second is \emph{GIN} \citep{sorrenson2020disentanglement},
with 8 volume-preserving affine coupling blocks, a conditional
Gaussian base, and $10^4$ Adam steps.
The next two are \emph{structured flows} with the inverse architecture
of the generating network,
$W_1\chi(W_2\chi(W_3\chi(W_4\x)))$,
where $\chi$ is an elementwise invertible activation.
Both have an exact log-determinant and a conditional Gaussian base,
and are trained end to end by maximum likelihood using Adam
($5\cdot10^3$ steps; the best of four learning-rate/initialization
configurations is selected by validation likelihood).
The \emph{true-architecture flow} uses leaky ReLU with a learnable
slope, matching the activation family of the generating network.
The \emph{tanh flow} uses the smooth activation
$\chi=\mathrm{id}+\nu\tanh$, $\nu>-1$.
The main text reports the tanh flow as the
\emph{oracle-structured flow} (OSF), since its architecture uses
knowledge of the generating network.
Results for the true-architecture flow are reported in
Table~\ref{tab:true_arch}.

The fifth estimator is \ours{} trained \emph{from scratch}.
We run $\textsc{Pull}$ from $\Phi_0=\mathrm{id}$ with at most
$K=40$ layers.
At each iteration, the candidate field classes are linear,
coordinatewise, joint, and rank-one; rank-$d$ is not used in
this configuration.
The base family is conditional Gaussian for synthetic data
and conditional log-cosh for speech.

We also use \ours{} to \emph{refine} GIN and the two structured
flows, all of which have tractable exact likelihoods.
For each model, $\textsc{Pull}$ starts from its pretrained
transformation $\Phi_0$ and selects candidate layers from
all five field classes: linear, coordinatewise, joint,
rank-one, and rank-$d$.
Across all datasets, including speech, refinement uses the
conditional Gaussian base family used to train these models.
Thus, refinement differs from training \ours{} from scratch
in its initialization, candidate field classes, and,
for speech, base family.
We allow at most $K=40$ refinement layers, except for speech
with $d=5$, where $K=100$. In both configurations, the base-family parameters $\bm\beta$
are re-estimated by maximum likelihood on the current
pulled-back training samples before fitting each new layer.
Refinement starting from iVAE is discussed in
Section~\ref{app:results}.

\paragraph{\ours{} layers.} The RBF classes of Appendix~\ref{app:fields} use $L=12$ centers equally spaced on $[-5,5]$ with bandwidth $\omega=1.25$ (data are standardized).
Ridge levels $\lambda\in\{10^{-1},10^{-2},10^{-3}\}\cdot\overline{\|\bm\phi\|^2}$ (mean squared norm of the Stein features); rank-one
layers draw 15 random unit directions, keep the 3 best by the value of \eqref{eq:sdre} and refit them at two ridge levels; rank-$d$
layers draw 8 random orthogonal matrices and keep the best 2. Step sizes $\eta\in\{2,1,0.5,0.25,0.1\}$; a nonlinear candidate is rejected if,
at some training point, $\det\nabla T<\delta=0.05$ (joint, rank-one, rank-$d$) or a diagonal entry of $\nabla T$ is below $\delta$
(coordinatewise); linear candidates are not screened. Inverses: fixed-point iteration for the coordinatewise, joint and rank-one classes
(one-dimensional for rank-one), Newton's method for rank-$d$. The base family is re-estimated before every layer: segment-wise
means and standard deviations (Gaussian) or one-dimensional maximum likelihood (log-cosh). From scratch on speech, up to three
retries with fresh random directions are allowed before stopping. Convex problems are solved by L-BFGS (NumPy/SciPy) or damped
Newton (JAX); both implementations give the same layer sequence on most runs and the same table entries within $0.004$.

\paragraph{Optimization of the baselines.} iVAE: encoder/decoder MLPs with two hidden layers of 200 units each and an output layer, learning rate $10^{-2}$ with plateau decay,
$6\cdot10^3$ steps, ELBO objective. GIN: 8 affine coupling blocks with mean-zero log-scales (volume preserving), 2 hidden layers of
64 units, Adam with learning rate $10^{-3}$, batch 500, $10^4$ steps. Structured flows: Adam with cosine schedule, batch 500,
$5\cdot10^3$ steps, learning rate $\in\{10^{-2},10^{-3}\}$ and initialization $\in\{$identity$+$noise, random orthogonal$\}$,
the configuration with the best validation likelihood is reported. Extended training uses the same configuration with
$4\times$ the original number of steps.

\paragraph{Compute.} The reported \ours{} estimates used the NumPy/SciPy implementation on a laptop CPU (AMD Ryzen AI MAX+ 395, 16 cores, one core
per run). The iVAE, GIN and structured-flow baselines use JAX. The run times quoted for the JAX implementation of \ours{} were measured on a single NVIDIA RTX 4090.
Each layer solves a few dozen convex problems (field classes $\times$ ridge
levels $\times$ random directions) with at most $d^2L$ parameters on $n$ points, where $L=12$ is the number of radial basis functions per
direction (Appendix~\ref{app:fields}); with $d\le5$ this takes a fraction of a second per layer on a GPU. A full refinement of 40--100
layers runs in 6--29\,s on one RTX 4090 with our JAX implementation, or 1.5--6\,min on one CPU core.

\subsection{Main results and training controls}

Table~\ref{tab:main} gives the main comparison, and Table~\ref{tab:true_arch} adds the true-architecture flow. As a standalone estimator, \ours{} is second only to the tanh flow on synthetic data and
above the true-architecture flow on LibriSpeech with $d=3$, but it falls behind both structured flows in the other speech settings. Used
as a refiner, it improves the mean MCC of \emph{every} pretrained model in every column, and refining the tanh flow gives the best result in
all six settings, with a smaller seed-to-seed spread than the pretrained model in every speech column. Interestingly, the true-architecture flow is
not the best pretrained model to refine: extended training changes its mean MCC by at most $0.06$ (Table~\ref{tab:true_arch}), whereas refinement adds
$0.02$--$0.18$.

\begin{table}[h]\centering\small
\caption{The true-architecture structured flow (leaky-ReLU activation), complementing Tables~\ref{tab:main} and~\ref{tab:control}:
validation MCC (mean $\pm$ std over mixing seeds). Last column: seeds on which refinement beats extended training.}
\label{tab:true_arch}

\setlength{\tabcolsep}{3.7pt}
\begin{tabular}{l cccccc c}
\toprule
 & \multicolumn{2}{c}{Synthetic} & \multicolumn{2}{c}{ARCTIC} & \multicolumn{2}{c}{LibriSpeech} & \\
 & $n_c{=}200$ & $n_c{=}1000$ & $d{=}3$ & $d{=}5$ & $d{=}3$ & $d{=}5$ & wins \\
\midrule
Structured flow (true arch.) & 0.591{\scriptsize$\pm$0.071} & 0.574{\scriptsize$\pm$0.061} & 0.767{\scriptsize$\pm$0.144} & 0.614{\scriptsize$\pm$0.013} & 0.750{\scriptsize$\pm$0.139} & 0.651{\scriptsize$\pm$0.075} & \\
\quad Extended training & 0.604{\scriptsize$\pm$0.074} & 0.595{\scriptsize$\pm$0.072} & 0.759{\scriptsize$\pm$0.134} & 0.670{\scriptsize$\pm$0.095} & 0.756{\scriptsize$\pm$0.143} & 0.673{\scriptsize$\pm$0.065} & \\
\quad + \ours{} & \textbf{0.768}{\scriptsize$\pm$0.076} & \textbf{0.734}{\scriptsize$\pm$0.073} & \textbf{0.783}{\scriptsize$\pm$0.166} & \textbf{0.704}{\scriptsize$\pm$0.074} & \textbf{0.792}{\scriptsize$\pm$0.160} & \textbf{0.712}{\scriptsize$\pm$0.123} & 19/22 \\
\bottomrule
\end{tabular}
\end{table}

Refinement adds further training, so a natural objection is that the pretrained model was merely under-trained. To test this, we use extended training
for every pretrained model (Tables~\ref{tab:control} and~\ref{tab:true_arch}). Extended training changes the mean MCC by $-0.022$ to $+0.094$;
even its largest gain ($0.762\to0.856$ for the tanh flow on synthetic $n_c=1000$) stays below refinement ($0.887$). Overall, refinement
is better on 58 of 66 seeds over the three pretrained models (39 of 44 in Table~\ref{tab:control}, 19 of 22 for the true-architecture flow), and on speech extended training can even reduce MCC (4 of the 12 speech entries).

\subsection{Additional analyses}
\label{app:results}

\paragraph{Starting from iVAE.} Starting from the encoder mean of iVAE, which is neither invertible nor equipped with a
log-determinant, changes MCC by at most $0.025$ on the synthetic benchmark ($0.484\to0.509$ and $0.492\to0.498$) and by $-0.028$ to
$+0.048$ on speech (ARCTIC $0.538\to0.586$ and $0.383\to0.367$, LibriSpeech $0.578\to0.560$ and $0.461\to0.432$ for $d=3,5$). It improves
12 of the 22 seeds, so there is no systematic gain.

\paragraph{Stopping by the discrepancy.} Can we use $\hat J$ to decide when to stop? The value $\hat J$ of each accepted layer estimates a lower bound on the remaining divergence.
Computed on the samples used to fit the layer, it is about $m/(2n)$ for a layer with $m$ parameters even when the pulled-back samples
already follow the base, which suggests stopping once $\hat J<\kappa\,m/(2n)$ for a constant $\kappa\ge1$. On 42 traced refinements
(Table~\ref{tab:stop}), this rule is counter-productive: running all 40 layers gives mean MCC $0.770$,
stopping with $\kappa=1$ or $4$ gives $0.736$ or $0.717$, and even an oracle that stops at the best layer only reaches $0.777$. Later
layers still help although each is individually insignificant. This does not argue against the test in push-forward algorithm: there the pretrained
model is trusted and every change must be justified by a detected discrepancy, whereas here the pretrained model is known to be imperfect.

\begin{table}[h]\centering\small
\caption{Stopping refinement by $\hat J<\kappa\,m/(2n)$, post hoc on 42 traced 40-layer refinements (ARCTIC and LibriSpeech
$d=3$, ARCTIC 20\,s $d=3$, synthetic $n_c=200$). Mean validation MCC at the stopping layer (mean number of accepted layers).}
\label{tab:stop}
\begin{tabular}{lcccccc}
\toprule
Pretrained model & no refinement & all 40 layers & oracle layer & $\kappa=1$ & $\kappa=4$ & $\kappa=16$ \\
\midrule
all & 0.706 & 0.770 & 0.777 & 0.736 (15.2) & 0.717 (5.2) & 0.705 (1.4) \\
GIN & 0.589 & 0.662 & 0.664 & 0.630 (8.6) & 0.617 (3.4) & 0.595 (0.4) \\
Structured (true) & 0.706 & 0.769 & 0.788 & 0.726 (23.7) & 0.694 (7.4) & 0.691 (2.4) \\
Structured (tanh) & 0.824 & 0.878 & 0.879 & 0.851 (13.4) & 0.841 (4.9) & 0.829 (1.3) \\
\bottomrule
\end{tabular}
\end{table}

\paragraph{Selecting among refined pretrained models.} In the following experiments, final validation likelihood is effective for choosing among refined pretrained models. Over the 22 seeds of Table~\ref{tab:main}
and the six seeds of the 20\,s ARCTIC setting below, refining every pretrained model and keeping the one with the highest final validation
likelihood gives mean MCC $0.857$, against $0.855$ for always refining the tanh flow, and is never worse than the latter on any seed.
Selecting the pretrained model by its likelihood \emph{before} refinement instead picks the flexible GIN in 7 of 28 cases and drops to $0.810$. 

\paragraph{Longer recordings and a failure of refinement.} More data do not always help, and refinement can fail. With 20\,s instead of 5\,s per source (seven ARCTIC speakers, with
speaker subsets varying across seeds), MCC does not improve systematically (Table~\ref{tab:arctic20}): only
iVAE and the unrefined true-architecture flow gain in both dimensions, and the comparison is confounded by the change of speakers. The
refined tanh flow remains best ($0.865$ and $0.791$ for $d=3,5$) and beats extended training on all 6 seeds. Refining the
true-architecture flow, however, beats the control on only 2 of 6 seeds, and on one seed MCC collapses from $0.698$ to $0.502$ while the
validation likelihood rises from $-1.34$ to $+0.90$. From a poor pretrained model, the likelihood itself can lead away from the sources, which no
likelihood-based signal, including $\hat J$, can detect. This is the caveat of Section~\ref{sec:method:stack} in practice: the likelihood
of a transformation model does not by itself say which transformation recovers the sources.

\begin{table}[h]\centering\small
\caption{ARCTIC with 5\,s versus 20\,s per source (validation MCC, mean $\pm$ std over 3 seeds). The 20\,s setting uses seven
speakers with a different speaker subset per seed. $^\dagger$One iVAE run diverged; mean over 2 seeds.}
\label{tab:arctic20}
\begin{tabular}{lcccc}
\toprule
 & \multicolumn{2}{c}{$d=3$} & \multicolumn{2}{c}{$d=5$} \\
\cmidrule(lr){2-3}\cmidrule(lr){4-5}
Method & 5\,s & 20\,s & 5\,s & 20\,s \\
\midrule
iVAE & 0.538{\scriptsize$\pm$0.090} & 0.573{\scriptsize$\pm$0.033}$^\dagger$ & 0.383{\scriptsize$\pm$0.034} & 0.494{\scriptsize$\pm$0.014} \\
GIN & 0.645{\scriptsize$\pm$0.084} & 0.525{\scriptsize$\pm$0.057} & 0.440{\scriptsize$\pm$0.037} & 0.432{\scriptsize$\pm$0.019} \\
Structured flow (true arch.) & 0.767{\scriptsize$\pm$0.144} & 0.792{\scriptsize$\pm$0.073} & 0.614{\scriptsize$\pm$0.013} & 0.627{\scriptsize$\pm$0.058} \\
Structured flow (tanh) & 0.854{\scriptsize$\pm$0.024} & 0.816{\scriptsize$\pm$0.017} & 0.747{\scriptsize$\pm$0.046} & 0.711{\scriptsize$\pm$0.062} \\
\ours{} (from scratch) & 0.621{\scriptsize$\pm$0.150} & 0.598{\scriptsize$\pm$0.057} & 0.518{\scriptsize$\pm$0.047} & 0.480{\scriptsize$\pm$0.007} \\
\midrule
GIN + \ours{} & 0.747{\scriptsize$\pm$0.090} & 0.655{\scriptsize$\pm$0.104} & 0.587{\scriptsize$\pm$0.081} & 0.594{\scriptsize$\pm$0.083} \\
Structured (true) + \ours{} & 0.783{\scriptsize$\pm$0.166} & 0.736{\scriptsize$\pm$0.167} & 0.704{\scriptsize$\pm$0.074} & 0.680{\scriptsize$\pm$0.007} \\
Structured (tanh) + \ours{} & \textbf{0.883}{\scriptsize$\pm$0.007} & \textbf{0.865}{\scriptsize$\pm$0.008} & \textbf{0.805}{\scriptsize$\pm$0.031} & \textbf{0.791}{\scriptsize$\pm$0.031} \\
\bottomrule
\end{tabular}
\end{table}

\paragraph{Physical mixing.} Real physical mixing remains hard. On the light tunnel of the Causal Chambers \citep{gamella2025causal}, the actuators (LED intensities and
a polarizer angle) are mixed by physics and recorded by photodiodes. All nonlinear methods, including ours, stay below linear FastICA
\citep{hyvarinen1999fast} (MCC $0.878$ and $0.815$ on two experiments, against at most $0.798$ and $0.623$; Table~\ref{tab:lighttunnel}), in
line with \citet{gamella2025sanity}. The sources are strongly sub-Gaussian within segments (excess kurtosis $\approx-1.2$) and their
distribution changes little across segments, so the Gaussian and super-Gaussian base families are misspecified in the direction
that matters for ICA \citep{amari1997stability}; a base family that can be sub-Gaussian is needed here.

\begin{table}[h]\centering\small
\caption{Causal Chambers light tunnel (\texttt{lt\_walks\_v1}): photodiode readings projected on their top three principal components
(the six sensors come in near-duplicate pairs), 20 time segments, validation MCC against the recoverable actuators, mean $\pm$ std over
3 train/validation splits. Targets: red, blue and $\cos^2$ of the polarizer angle (\texttt{smooth\_polarizers}); red, green, blue
(\texttt{color\_mix}). The supervised bound regresses each target on the inputs with random Fourier features.}
\label{tab:lighttunnel}
\begin{tabular}{lcc}
\toprule
Method & \texttt{smooth\_polarizers} & \texttt{color\_mix} \\
\midrule
Supervised bound & 0.946{\scriptsize$\pm$0.002} & 0.949{\scriptsize$\pm$0.008} \\
Principal components (no unmixing) & 0.754{\scriptsize$\pm$0.002} & 0.582{\scriptsize$\pm$0.013} \\
FastICA (linear) & \textbf{0.878}{\scriptsize$\pm$0.001} & \textbf{0.815}{\scriptsize$\pm$0.006} \\
\midrule
iVAE & 0.798{\scriptsize$\pm$0.034} & 0.583{\scriptsize$\pm$0.016} \\
GIN & 0.638{\scriptsize$\pm$0.001} & 0.533{\scriptsize$\pm$0.026} \\
Structured flow (true arch.) & 0.665{\scriptsize$\pm$0.004} & 0.615{\scriptsize$\pm$0.006} \\
Structured flow (tanh) & 0.649{\scriptsize$\pm$0.016} & 0.623{\scriptsize$\pm$0.012} \\
\ours{} from scratch (Gaussian / log-cosh) & 0.772 / 0.677 & 0.553 / 0.546 \\
GIN / true / tanh + \ours{} & 0.642 / 0.649 / 0.647 & 0.522 / 0.583 / 0.598 \\
\bottomrule
\end{tabular}
\end{table}

\paragraph{Stock returns.} We use the MSFT, IBM, SBUX, AAPL and S\&P\,500 price series in Plotly's public \texttt{stockdata.csv}.\footnote{\url{https://github.com/plotly/datasets/blob/master/stockdata.csv}} We take first differences of the log prices as supplied, with no additional price adjustment, and discard the final incomplete quarter. This leaves 2,265 daily returns from 4 January 2007 to 31 December 2015, grouped into 36 calendar-quarter segments. Each return series is standardized, and the observations are randomly split into 80\% training and 20\% validation data. With a log-cosh base family, 23 \ours{} layers reached a validation
log-likelihood of $-4.83$ per day, compared with $-4.92$ and $-4.98$ for four-layer structured flows with a tanh and a leaky-ReLU
activation (best of four learning-rate and initialization configurations by validation likelihood), $-6.27$ for GIN (which overfits:
$-4.43$ on the training data) and $-6.31$ for independent marginals. The recovered components are one market factor (correlation
$0.99$ with the S\&P\,500) and four idiosyncratic ones, each correlated with one stock ($0.69$--$0.79$) and almost uncorrelated
with the index ($\le0.09$).

\clearpage
\section{Simulation-based inference: experiments and comparisons}
\label{app:sbi}

In Table~\ref{tab:sbi},
``+\,\ours{}'' denotes push-forward correction of the frozen posterior sampler.

\subsection{Tasks and target samples}

\paragraph{Tasks and references.} Both sbibm tasks \citep{lueckmann2021benchmarking} have 10-dimensional parameters and observations.
\texttt{gaussian\_linear}: $\bm\theta\sim\mathcal N(\bm0,0.1I_{10})$, $\bm y=\bm\theta+\mathcal N(\bm0,0.1I_{10})$;
all posteriors (including those under mixture priors) are analytic. \texttt{bernoulli\_glm}: bias and nine filter weights, a fixed
white-noise stimulus of 100 time bins, Bernoulli spikes with logit $\psi_j=b+\bm w^\T\bm u_j$ in time bin $j$, summary statistics the spike count and the
spike-triggered sum; prior as in sbibm. In the modified target simulator used for the simulator-change and partial-coverage experiments, the spike probability
is $\mathrm{sigmoid}(1.3\,\psi_j)$ instead of $\mathrm{sigmoid}(\psi_j)$. Reference posteriors are drawn by Hamiltonian Monte Carlo preconditioned with the Hessian at the
MAP (64 chains, 10 leapfrog steps, 2000 warm-up iterations, thinning 5; acceptance $0.99$); two independent runs give C2ST $0.49$. The
sufficient-statistic likelihood was checked against the raw Bernoulli likelihood. New priors: $\mathcal N(0.15\cdot\bm1,0.05I)$ and the
mixture $\frac{1}{2}\mathcal N(\pm0.2\cdot\bm1,0.05I)$ (\texttt{gaussian\_linear}); bias $\mathcal N(-1,1)$ and the filter precision doubled
(\texttt{bernoulli\_glm}).

For prior changes, target pairs are resampled with replacement from the existing $10^5$ simulations, with probabilities
proportional to the prior ratio, without new simulations. For simulator change, we generate $n_{\rm real}\in\{500,2000\}$ target pairs
from the modified simulator. For partial coverage, we generate $10^3$ pairs from that simulator restricted to neurons with negative bias;
evaluation covers both negative-bias (covered) and positive-bias (uncovered) neurons.

All evaluated corrections use the same target pairs as \ours{}; Table~\ref{tab:sbi} specifies which settings are evaluated for each baseline.

\subsection{Sampler and corrections}

\paragraph{Pretrained sampler.} On each task, we train NPSE \citep{song2021score,sharrock2024sequential} on $10^5$ simulations
and freeze its network during correction. We use VP diffusion (linear noise schedule from $0.1$ to $20$) with a noise-prediction MLP (4 hidden layers of 512 SiLU units, sinusoidal
time embedding, skip term $\sigma_t\bm\theta_t$ for the noised parameter $\bm\theta_t$) on standardized parameters and observations; denoising score matching for $t\in[10^{-3},1]$,
$3\cdot10^4$ Adam steps with batch 1024, exponential moving average $0.999$ and early stopping on validation simulations. Sampling uses 25 deterministic DDIM steps \citep{song2021denoising} on the grid
$t_s=10^{-3}+(1-10^{-3})(1-s/25)^2$, $s=0,\dots,25$, so that $t_0=1$ is the level of the latent distribution. Training is deterministic given the seed.

\paragraph{\ours{} layers.} We use the gated field \eqref{eq:cond-field} and the acceptance rule of Algorithm~\ref{alg:push},
with hyperparameters selected on validation target samples. The field has 300 centers $\bm\gamma_j$ drawn from the noised target pairs, one
bandwidth pair $\omega_\theta\in\{1,2\}$, $\omega_y\in\{1.5,3\}$ (standardized units), and $\tau$ the 5\% quantile of $\hat p_{\rm kde}$
on the target observations. Unlike in \eqref{eq:sdre}, the penalty is $\lambda\cdot\frac{1}n\sum_i\|\vf(\bm\theta_i,\bm y_i)\|^2$ (no factor
$\frac{1}{2}$), and the bandwidth pair and $\lambda\in\{10^{-3},10^{-4}\}$ are chosen jointly by the validation $\hat J$ at every layer (all
$4\times2$ combinations are fitted). L-BFGS is stopped at the best validation value of the objective: with about $3\cdot10^3$
coefficients and these small $\lambda$, the empirical objective keeps decreasing long after the validation one starts to increase. Step
$\eta_k=\min\{1,1/(2\max_i\|\nabla\hat\vf_k\|_2)\}$ with the spectral norm and the maximum over the target samples; up to $K=6$ layers
per noise level; 80\% of the target samples fit the field and 20\% decide acceptance. Target sample sizes: 3000 simulations drawn with replacement
with probabilities proportional to the prior ratio (prior change), and the training part of the $n_{\rm real}$ target pairs, each noised several times to about 3000 points per level
(simulator change and partial coverage).

\subsection{Evaluation and baselines}

We measure accuracy by the classifier two-sample test
\citep[C2ST;][]{lopezpaz2017revisiting}: the accuracy of a classifier trained to separate $5\cdot10^3$ samples of the sampler from
reference samples, with chance accuracy $0.5$ and lower values indicating a closer match. It is averaged over 10 observations drawn from the target (20 for partial coverage, 10 in each half),
with three seeds for the network, the target samples and the push-forward layers.

\paragraph{Baselines.} PriorGuide \citep{yang2025priorguide} is evaluated only for prior changes. The prior ratio is one Gaussian (unimodal and \texttt{bernoulli\_glm}) or a two-component mixture, so
its mixture representation is exact; reverse kernel $\mathcal N(\hat{\bm\theta},\sigma_t^2/(a_t^2+\sigma_t^2)I)$ with the Tweedie estimate
$\hat{\bm\theta}$ of the clean parameter; the gradient of the guidance term is computed exactly through $\hat{\bm\theta}$, so the guided ODE is an invertible
sampler with a tractable density. Langevin refinement: $\{2,5,10\}$ steps per level, signal-to-noise ratio $\{0.08,0.16\}$, best by the
test C2ST. DG \citep{kim2023refining}: 25 discriminators (4 hidden layers of 512 units, $8\cdot10^3$ Adam steps, learning rate $3\cdot10^{-4}$, fresh forward noise every 25
steps) between noised target pairs and noised model pairs with the same observations, model samples from a 100-step DDIM sampler;
guidance adds the discriminator-logit gradient to the score with weight 1, followed by Langevin refinement on the same grid,
selected by the maximum mean discrepancy on validation pairs (selection by the test C2ST changes the numbers by
at most $0.006$). Classifier DRE + SIR \citep{bickel2009discriminative}: MLPs with $2\times256$ and $3\times512$ units between the target pairs and as many simulated pairs,
the one with the lowest validation log-loss; SIR on $5\cdot10^3$ samples of the frozen sampler. The simulated pairs are a uniform
subsample of the existing simulations under a prior change, so that no new simulation is run, fresh simulations under a simulator change,
and fresh simulations from the same half of the prior (negative bias) under partial coverage, so that the classifier estimates the change
of likelihood on the covered half. Fine-tuning on the target pairs: $4\cdot10^3$ Adam steps at learning
rate $3\cdot10^{-4}$ from the pretrained weights with early stopping on validation pairs; the sim.\ + real variant is evaluated only under partial coverage,
as a rehearsal baseline for the uncovered region. It mixes $2\cdot10^4$ original simulations with
oversampled target pairs in equal proportion. The retrained reference uses the same architecture, trained from scratch on $10^5$
new target simulations, beyond the correction budget; under partial coverage, it is the network retrained for
the simulator change, evaluated on the partial-coverage observations.

\subsection{Statistical comparisons and limitations}

For Table~\ref{tab:sbi}, paired $t$-tests over seeds compare \ours{} with PriorGuide + Langevin under prior change
($p=0.092$, $0.003$ and $0.56$ for unimodal, bimodal and GLM, respectively). PriorGuide + Langevin has lower mean C2ST in all three
settings, with a statistically significant difference only for the bimodal prior; its hyperparameters are selected by test C2ST.
Under simulator change, the comparisons with DG give $p=0.42$ and $0.43$, while \ours{} outperforms fine-tuning with $p=0.004$ and
$0.009$ for $n_{\rm real}=500$ and $2000$, respectively. Under partial coverage, \ours{} outperforms fine-tuning on both the covered
and uncovered regions ($p=0.020$ and $0.024$, respectively).

\paragraph{Separated modes.} Push-forward only transports the samples that the sampler already produces. When the target
needs mass moved between separated modes, as for the bimodal prior in Table~\ref{tab:sbi} or for a pretrained sampler that misses a mode
altogether, each layer can only make a small, invertible, locally Lipschitz move. The per-layer discrepancy then quickly falls below the
detection threshold, and push-forward stops short (C2ST $0.573$ against $0.510$ for retraining); a stochastic correction that is not
limited to small invertible moves, such as the Langevin refinement of PriorGuide, is better suited to this case.

\paragraph{Compute.} Between two and eleven correction layers were accepted per task. One GH200. Per run: \ours{} 3--4\,min, DG 9--13\,min (dominated by the discriminators), PriorGuide with Langevin
4--7\,min, fine-tuning under 1\,min, retraining 2\,min.

\section{Related Work}
\label{sec:related}

Direct density ratio estimation fits the ratio without modeling either density: moment matching in an RKHS \citep{huang2007correcting}, KLIEP and its
log-linear form \citep{sugiyama2008direct,tsuboi2009direct}, least-squares importance fitting \citep{kanamori2009least}, convex risk
minimization for $f$-divergences \citep{nguyen2010estimating}, and probabilistic classification
\citep{bickel2009discriminative,gutmann2010noise}; see \citet{sugiyama2012density} for a monograph. Neural variational bounds
\citep{belghazi2018mine} use the same DV representation as \eqref{eq:stein-dv}. When the two distributions are far apart, telescoping
DRE chains classifiers along fixed bridging distributions \citep{rhodes2020telescoping}, \citet{choi2022density} take the infinitesimal
limit of such bridges, and featurized DRE estimates the ratio in the latent space of a shared flow \citep{choi2021featurized}. The
pull-back estimator is also a product of small ratios, but the intermediate distributions are produced by transport and adapted to the
data, and the final map is itself a flow, as in featurized DRE. All of these methods need samples from both distributions; we need
samples from one and the score of the other. Stein-parametrized ratio models were introduced for fitting intractable models
\citep{liu2019fisher}; we add the displacement interpretation, the connection to the Fisher--Rao flow, and the push-forward and pull-back
algorithms. Score-based alternatives such as score matching \citep{hyvarinen2005estimation} and Stein discrepancies
\citep{gorham2015measuring,liu2016kernelized,chwialkowski2016kernel} compare a model with data through the model score, but return a
discrepancy rather than a ratio or a transport.

Among gradient flows, Wasserstein gradient flows of the KL divergence \citep{jordan1998variational}, and their particle approximation SVGD
\citep{liu2016stein}, need the score of the target. Fisher--Rao and Wasserstein--Fisher--Rao flows
\citep{chizat2018interpolating,liero2018optimal} are simulated by birth--death dynamics \citep{lu2019accelerating} or by kernel-based
transport \citep{maurais2024sampling,nusken2024stein}, which we compare with next.

The closest works to \ours{} are kernel Fisher--Rao flow \citep{maurais2024sampling} and Stein transport \citep{nusken2024stein}. Both
move particles along a tempered path from a reference $\pi_0$ to a target $\pi_1$, $\pi_t\propto\pi_0^{1-t}\pi_1^{t}$, with a velocity field
that solves the same equation as our optimal field: the continuity equation $-\nabla\cdot(\pi_t\vf_t)=\pi_t\big(\log\ell-\E_{\pi_t}[\log\ell]\big)$
with $\ell=\pi_1/\pi_0$, which in Stein form reads $-\A_{\pi_t}\vf_t=\log\ell-\E_{\pi_t}[\log\ell]$ (compare Proposition~\ref{prop:ratio}).
Kernel Fisher--Rao flow takes the solution of minimal kinetic energy, a gradient field obtained from a Poisson equation in an RKHS, and
needs only evaluations of $\ell$; Stein transport fits the Stein equation by kernel ridge regression and also uses the scores along the
path. In both, the log-ratio $\log\ell$ is known (in Bayesian inference, it is the log-likelihood) and serves as the right-hand side of the
equation. In our setting the target is observed only through samples, so the log-ratio is unknown: \ours{} estimates it together with the
field. Moreover, both methods traverse a
prescribed path in unit time, whereas push-forward and pull-back take one estimated step at a time and re-estimate the ratio after each.

Pulling the data back, one simple map at a time, until they look like a simple base distribution is a classic idea. Projection pursuit
removes structure along one direction at a time by Gaussianizing the projection \citep{friedman1987exploratory}; it is the counterpart
of our rank-one class. Gaussianization alternates linear ICA with marginal Gaussianization \citep{chen2001gaussianization}, the
counterpart of our linear and coordinatewise classes, and RBIG replaces the ICA step by random rotations and shows convergence
\citep{laparra2011iterative}, the counterpart of our rank-$d$ class. \citet{tabak2010density} and \citet{tabak2013family}, whose work introduced the term
\emph{normalizing flow}, compose simple maps that each increase the likelihood of the pulled-back samples under a standard normal
base. As shown in Appendix~\ref{app:alg:pull}, their ascent
direction is the linear Stein functional of which \eqref{eq:stein-dv} is the log-moment counterpart. Density destructors
\citep{inouye2018deep}, sliced iterative normalizing flows \citep{dai2021sliced} and gradient-boosted flows
\citep{giaquinto2020gradient} develop the greedy construction further, and Gaussianization flows make iterative Gaussianization
trainable end to end \citep{meng2020gaussianization}. Our construction
differs in three ways: each layer is the solution of a convex problem whose value estimates a lower bound on the remaining divergence; the
base is a parametric family re-estimated at every step and may be conditional on an auxiliary variable; and every step is at the
same time a density ratio estimate. See \citet{papamakarios2021normalizing} for normalizing flows in general.

In independent component analysis, linear ICA \citep{comon1994independent} is identified by non-Gaussianity; maximum-likelihood and infomax estimators
\citep{bell1995information,cardoso1997infomax,pham1997blind} are optimized by the relative gradient \citep{amari1996new}, whose
stability depends on matching the sign of the source kurtosis \citep{amari1997stability}; FastICA \citep{hyvarinen1999fast} adapts to
this sign. Nonlinear ICA is not identifiable without further assumptions \citep{hyvarinen1999nonlinear}; post-nonlinear mixtures
\citep{taleb1999source} are a classical exception. Auxiliary variables restore identifiability: time-contrastive learning
\citep{hyvarinen2016unsupervised}, temporal dependence \citep{hyvarinen2017nonlinear}, generalized contrastive learning
\citep{hyvarinen2019nonlinear}, iVAE \citep{khemakhem2020variational}, GIN \citep{sorrenson2020disentanglement} and hidden-Markov
variants \citep{halva2020hidden}; other routes constrain the mixing \citep{gresele2021independent} or use contrastive objectives
\citep{zimmermann2021contrastive}. We use the same model class as GIN---a conditional base and an exact-likelihood flow---but
replace end-to-end training by convex layer-wise estimation, and use it to refine end-to-end estimators. Real-world checks of these
methods remain sobering \citep{gamella2025sanity}, as our light-tunnel results confirm.

\end{document}